\title{Dynamical Conditional Optimal Transport through Simulation-Free Flows}
\author{%
  Gavin Kerrigan \\
  Department of Computer Science\\
  University of California, Irvine\\
  \texttt{gavin.k@uci.edu}
  \And
  Giosue Migliorini \\
  Department of Statistics \\
  University of California, Irvine\\
  \texttt{gmiglior@uci.edu} \\
  \And
  Padhraic Smyth \\
  Department of Computer Science \\
  University of California, Irvine\\
  \texttt{smyth@ics.uci.edu} \\
}
\begin{document}

\maketitle

\begin{abstract}
    We study the geometry of conditional optimal transport (COT) and prove a dynamical formulation which generalizes the Benamou-Brenier Theorem. Equipped with these tools, we propose a simulation-free flow-based method for conditional generative modeling. Our method couples an arbitrary source distribution to a specified target distribution through a triangular COT plan, and a conditional generative model is obtained by approximating the geodesic path of measures induced by this COT plan. Our theory and methods are applicable in infinite-dimensional settings, making them well suited for a wide class of Bayesian inverse problems. Empirically, we demonstrate that our method is competitive on several challenging conditional generation tasks, including an infinite-dimensional inverse problem. 
\end{abstract}

\section{Introduction}

Many fundamental tasks in machine learning and statistics may be posed as modeling a conditional distribution $\nu(u \mid y)$, but where obtaining an analytical representation of $\nu(u \mid y)$ is often impractical. While sampling-based approaches such as Markov Chain Monte Carlo (MCMC) methods are useful, they suffer from several limitations. First, MCMC requires numerous likelihood evaluations, rendering it prohibitively expensive in scientific and engineering applications where the likelihood is determined by an expensive numerical simulator. Second, MCMC must be run anew for every observation $y$, which is impractical in applications such as Bayesian inverse problems \citep{dashti2013bayesian} and generative modeling \citep{mirza2014conditional}. These limitations motivate the need for a \textit{likelihood-free} \citep{cranmer2020frontier} and \emph{amortized} \citep{amos2023tutorial} approach. While methods like ABC \citep{beaumont2010approximate} and variational inference \citep{blei2017variational} address these challenges, they are difficult to scale to high dimensions or have limited flexibility.

Recently, generative models such as normalizing flows \citep{papamakarios2019sequential, papamakarios2021normalizing}, GANs \citep{ramesh2022gatsbi}, and diffusion models \citep{sharrock2022sequential} have shown promise in amortized and likelihood-free inference. These models may be viewed in the framework of \textit{measure transport} \citep{baptista2020conditional}, where samples $u \sim \eta(u)$ from a tractable source distribution are transformed by a mapping $T(y, u)$ such that the transformed samples are approximately distributed as $\nu(u \mid y)$. One way to achieve this is through \textit{triangular} mappings \citep{baptista2020conditional, spantini2022coupling}, where a joint source distribution $\eta(y, u)$ is transformed by a mapping of the form $T: (y, u) \mapsto (T_Y(y), T_U(y, u))$. Under suitable assumptions, if $T$ transforms the source $\eta(y, u)$ into the target $\nu(y, u)$, then $T_U(y, \blank)$ couples the conditionals $\eta(u \mid y)$ and $\nu(u \mid y)$.

Typically, such a map $T$ is not unique \citep{wang2023efficient}, and a natural idea is thus to regularize the transport and search for an admissible mapping that is in some sense optimal. In other words, learning a conditional sampler may be phrased as finding a conditional optimal transport (COT) map. While there exists some work on learning COT maps, these approaches often rely on a difficult adversarial optimization problem \citep{baptista2020conditional, hosseini2023conditional, bunne2022supervised, ray2022efficacy} or simulating from the model during training \citep{baptista2023representation, wang2023efficient}. In this work, we propose a conditional generative model for likelihood-free inference based on a dynamic formulation of conditional optimal transport. Specifically, our contributions are as follows:
\begin{enumerate}[itemsep=0pt]
    \item We develop a general theoretical framework for dynamic conditional optimal transport in separable Hilbert spaces. Our framework is applicable in infinite-dimensional spaces, enabling applications in function space Bayesian inference. In Section \ref{section:conditional_wasserstein_space}, we study the \textit{conditional Wasserstein space} $\P_p^\mu(Y \times U)$ and show that this space admits constant speed geodesics between any two measures. In Section \ref{section:conditional_benamou_brenier}, we characterize the absolutely continuous curves of measures in $\P_p^\mu(Y \times U)$ via the continuity equation and \textit{triangular} vector fields. As a consequence, we obtain conditional generalizations of the McCann interpolants \citep{mccann1997convexity} and the Benamou-Brenier Theorem \citep{benamou2000computational}. 
    \item In Section \ref{section:cot_flow_matching}, we propose COT flow matching ({COT-FM}), a simulation-free flow-based model for conditional generation. This model directly leverages our theoretical framework, where we learn to model a path of measures interpolating between an arbitrary source and target distribution via a geodesic in the conditional Wasserstein space.
    \item In Section \ref{section:experiments}, we demonstrate our method on several challenging conditional generation tasks. We apply our method to two Bayesian inverse problems -- one arising from the Lotka-Volterra dynamical system, and an infinite-dimensional problem arising from the Darcy Flow PDE. Our method shows competitive performance against recent COT methods.
\end{enumerate}

\section{Related Work}

\paragraph{Conditional Optimal Transport.} Conditional Optimal Transport (COT) remains relatively under-explored in both machine learning and related fields. Recent approaches learn static COT maps via input convex networks \citep{bunne2022supervised, wang2023efficient} or normalizing flows \citep{wang2023efficient}. In addition, there have been a number of heuristic approaches to conditional simulation through W-GANs \citep{sajjadi2017enhancenet, adler2018deep, kim2022conditional, kim2023wasserstein}, for which \citet{chemseddine2023diagonal} provide a rigorous basis. Closely related to our method are those which employ triangular plans \citep{carlier2016vector, trigila2016data}, which have been modeled through GANs in Euclidean spaces \citep{baptista2020conditional} and function spaces \citep{hosseini2023conditional}. In contrast, our work uses a novel \textit{dynamic} formulation of COT, which we model through a generalization of flow matching \citep{lipman2022flow, albergo2023stochastic}. This allows us to use flexible architectures while avoiding the difficulties of training GANs \citep{arora2018gans}. 

\paragraph{Simulation-Free Continuous Normalizing Flows.}

Flow matching \citep{lipman2022flow} and stochastic interpolants \citep{albergo2023stochastic2} are a class of methods for building continuous-time normalizing flows in a simulation-free manner. 
Notably, these works do not approximate an optimal transport between the source and target measures.
\citet{pooladian2023multisample} and \citet{tong2023improving} propose instead to couple the source and target distributions via optimal transport, leading to marginally optimal paths. In this work, we study an extension of these techniques for conditional generation. 

While some works \citep{davtyan2023efficient, gebhard2023inferring, isobe2024extended, wildberger2024flow} have applied flow matching for conditional generation, these approaches do not employ COT.
Notably, the aforementioned approaches are limited to the finite-dimensional setting, whereas our method adds to the growing literature on function-space generative models \citep{hosseini2023conditional, kerrigan2022diffusion, kerrigan2023functional, lim2023score, franzese2024continuous}. At the time of our submission, concurrent work by \citet{barboni2024understanding} and \citet{chemseddine2024conditional} appeared with similar results. Our results were developed independently from these works. 

\section{Background and Notation}
\label{section:background}

Let $X, X^\prime$ represent arbitrary separable Hilbert spaces, equipped with the Borel $\sigma$-algebra. We use $\P(X)$ to represent the space of Borel probability measures on $X$, and $\P_p(X) \subseteq \P(X)$ to represent the subspace of measures having finite $p$th moment. If $\eta \in \P(X)$ is a probability measure on $X$ and $T:X \to X^\prime$ is measurable, then the pushforward measure $T_{\#}\eta(\blank) = \eta(T^{-1}(\blank))$ is a probability measure on $X^\prime$. Maps of the form e.g. $\pi^X: X \times X^\prime \to X$ represent the canonical projection.

We assume that we have two separable Hilbert spaces of interest. The first, $Y$, is a space of observations, and the second, $U$, is a space of unknowns. These spaces may be of infinite dimensions, but a case of practical interest is when $Y$ and $U$ are finite dimensional Euclidean spaces. We will consider the product space $Y \times U$, equipped with the canonical inner product obtained via the sum of the inner products on $Y$ and $U$, under which the space $Y \times U$ is also a separable Hilbert space. Let $\eta \in \P(Y \times U)$ be a joint probability measure. The measures $\pi^Y_{\#}\eta \in \P(Y)$ and $\pi^U_{\#}\eta \in \P(U)$ obtained via projection are the \textit{marginals} of $\eta$. We use $\eta^y \in \P(U)$ to represent the measure obtained by conditioning $\eta$ on the value $y \in Y$. By the disintegration theorem \citep[Chapter~10]{bogachev2007measure}, such conditional measures exist and are essentially unique, in the sense that there exists a Borel set $E \subseteq Y$ with $\pi^Y_{\#}\eta(E) = 0$, and the $\eta^y$ are unique for $y \notin E$.

\subsection{Static Conditional Optimal Transport}
\label{subsection:static_cot}

In conditional optimal transport, we are given a target measure $\nu \in \P(Y \times U)$ and some source measure $\eta \in \P(U)$, and we seek a transport map $T: Y \times U \to U$ such that $T_{\#}(y, \blank)_{\#} \eta = \nu^y$ for all $y \in Y$. If such a map were available, by drawing samples $u_0 \sim \eta$ and transforming them, one would obtain samples $T(y, u) \sim \nu^y$. Solving this transport problem for each fixed $y$ is expensive at best, or impossible when only has a single (or no) samples $(y, u) \sim \nu$ for any given $y$. Thus, one must leverage information across different observations $y$. To that end, recent work has focused on the notion of \textit{triangular mappings} $T:Y \times U \to Y \times U$ \citep{hosseini2023conditional, baptista2020conditional} of the form $ T(y, u) = \left(T_Y(y), T_U(T_Y(y), u)\right)$
for some $T_Y: Y \to Y$ and $T_U: Y \times U \to U$. Triangular mappings are of interest as they allow us to obtain conditional couplings from joint couplings.

\begin{prop}[Theorem~2.4 \citep{baptista2020conditional}, Prop.~2.3 \citep{hosseini2023conditional}] \label{prop:triangular_maps_couple_conditionals}
    Suppose $\eta, \nu \in \P(Y \times U)$ and $T: Y \times U \to Y \times U$ is triangular. If $T_{\#} \eta = \nu$, then $T_U(T_Y(y), \blank)_{\#} \eta^y = \nu^{T_Y(y)}$ for $\pi^Y_{\#}\eta$-almost every $y$.  
\end{prop}

In many scenarios of practical interest, the source measure $\eta$ and the target measure $\nu$ have the same $Y$-marginals. We will henceforth make this assumption, and use $\mu = \pi^Y_{\#}\eta = \pi^Y_{\#}\nu$ to represent this marginal. In this case, we may take $T_Y$ to be the identity mapping, so that the conclusion of Proposition \ref{prop:triangular_maps_couple_conditionals} simplifies to $T_U(y, \blank)_{\#} \eta^y = \nu^y$ for $\mu$-almost every $y$. We note that in situations where such an assumption does not hold, one may simply preprocess the source measure $\eta$ via an invertible mapping $T_{Y}$ satisfying $[T_Y]_{\#} [\pi^Y_{\#} \eta] = \pi^Y_{\#} \nu$ \citep[Prop~3.2]{hosseini2023conditional}.

Given a source and target measures $\eta, \nu \in \P^\mu(Y \times U)$ and a cost function $c: (Y \times U)^2 \to \R$, the \textit{conditional Monge problem} seeks to find a triangular mapping solving
\begin{equation} \label{eqn:conditional_monge_problem}
    \inf_{T} \left\{ \int_{Y \times U} c(y, u, T(y, u)) \d \eta(y, u) \mid T_{\#} \eta = \nu, T: (y, u) \mapsto (y, T_U(y, u))\right\}.
\end{equation}

The conditional Monge problem also admits a relaxation under which one only considers couplings whose $Y$-components are almost surely equal. To that end, for $\eta, \nu \in \P_p^\mu(Y \times U)$ we define the set of \textit{triangular couplings} $\Pi_Y(\eta, \nu)$ to be the couplings of $\eta$ and $\nu$ that almost surely fix the $Y$-components,

\begin{equation}
    \Pi_Y(\eta, \nu) = \left\{ \gamma \in \P\left(\left(Y \times U\right)^2\right) \mid \pi^{1,2}_{\#} \gamma = \eta, \pi^{3,4}_{\#} \gamma = \nu, \pi_{\#}^{1, 3} = (I, I)_{\#} \mu \right\}.
\end{equation} 

In other words, a triangular coupling $\gamma \in \Pi_Y(\eta, \nu)$ has samples $(y_0, u_0, y_1, u_1) \sim \gamma$ such that $y_0 = y_1$ almost surely. The \textit{conditional Kantorovich problem} seeks a triangular coupling solving
\begin{equation} \label{eqn:conditional_kantorovich_problem}
    \inf_{\gamma} \left\{ \int_{(Y \times U)^2} c(y_0, u_0, y_1, u_1) \d \gamma(y_0, u_0, y_1, u_1) \mid \gamma \in \Pi_Y(\eta, \nu) \right\}.
\end{equation}

\citet{hosseini2023conditional} prove the existence of minimizers to the conditional Kantorovich and Monge problems under very general assumptions. Moreover, optimal couplings to the conditional Kantorovich problem induce optimal couplings for $\mu$-almost every conditional measure. We refer to Appendix \ref{appendix:cot} and \citet{hosseini2023conditional} for further details. 

\section{Conditional Wasserstein Space}
\label{section:conditional_wasserstein_space}

Motivated by our discussion on triangular transport maps, we introduce the conditional Wasserstein spaces, consisting of joint measures with finite $p$th moments and having fixed $Y$-marginals $\mu$. Interestingly, \citet[Chapter~4]{gigli2008geometry} studies the same space for the purposes of constructing geometric tangent spaces in the usual Wasserstein space.

\begin{definition}[Conditional Wasserstein Space]
    Suppose $\mu \in \P(Y)$ is given and $1 \leq p < \infty$. The conditional $p$-Wasserstein space is
    \begin{equation}
        \P_p^\mu(Y \times U) = \left\{ \gamma \in \P_p(Y \times U) \mid \pi^Y_{\#} \gamma = \mu \right\}.
    \end{equation}
\end{definition}

We now equip $\P_p^\mu(Y \times U)$ with a metric $W_p^\mu$, the conditional Wasserstein distance. Intuitively, the conditional Wasserstein distance measures the usual Wasserstein distance between all of the conditional distributions in expectation under the fixed $Y$-marginal $\mu$.

\begin{definition}[Conditional $p$-Wasserstein Distance]
    \label{def:conditional_p_wasserstein_distance}
    Suppose $\eta, \nu \in \P_p^\mu(Y \times U)$ and $1 \leq p < \infty$. The function $W_p^\mu: \P_p^\mu(Y \times U) \times \P_p^\mu(Y \times U) \to \R$,
    \begin{equation}\label{eqn:conditional_p_wasserstein_distance}
        W_p^\mu(\eta, \nu) = \left( \E_{y \sim \mu} \left[W_p^p(\eta^y, \nu^y) \right] \right)^{1/p} = \left( \int_Y W_p^p(\eta^y, \nu^y) \d \mu(y) \right)^{1/p}
    \end{equation}

    is the {conditional $p$-Wasserstein distance}. $W_p$ is the usual Wasserstein distance for measures on $U$. 
\end{definition}

By Jensen's inequality we have $W_p^\mu(\eta, \nu) \geq \E_{y \sim \mu} \left[ W_p(\eta^y, \nu^y) \right]$. 
In the following, we show that the conditional Wasserstein distance is a well-defined metric as well as a few other metric properties.

\begin{restatable}[Some Properties of $W_p^\mu$]{prop}{propertiesofdistance}
    \label{prop:properties_of_distance}
    Let $1 \leq p < \infty$.
    \begin{enumerate}[label=(\alph*), itemsep=0pt]
        \item $W_p^\mu$ is well-defined, finite, and equals the minimal conditional Kantorovich cost.
        \item $W_p^\mu$ is a metric on the space $\P_p^\mu(Y \times U$). 
        \item There does not exist $C > 0$ such that $W_p^\mu(\eta, \nu) \leq C W_p(\eta, \nu)$ for all $\eta, \nu \in \P_p^\mu(Y \times U)$.
        \item For all $\eta, \nu \in \P_p^\mu(Y \times U)$, $W_p\left(\pi^U_{\#} \eta, \pi^U_{\#}\nu \right) \leq W_p^\mu(\eta, \nu)$ and $W_p(\eta, \nu) \leq W_p^\mu(\eta, \nu)$.
    \end{enumerate}    
\end{restatable}

Proposition \ref{prop:properties_of_distance}(c, d) together shows that one should expect the topology generated by $W_p^\mu$ to be stronger than the unconditional distance $W_p$. Here, we note that \citet{gigli2008geometry} and \citet{chemseddine2023diagonal} previously showed that $W_p^\mu$ is a metric through an equivalence with restricted couplings. Our approach builds on the results of \citet{hosseini2023conditional} and is somewhat more direct, and hence our proofs may be of independent interest. We include here an example where the conditional $2$-Wasserstein distance may be explicitly computed. 

\paragraph{Example: Gaussian Measures.} Suppose $Y = U = \R$ and that $\eta, \nu \in \P_2^\mu(Y \times U)$ are Gaussians of the form
\begin{equation}
   \eta = \NN(0, I) \qquad \nu = \NN\left(0, \begin{bmatrix} 1 & \rho \\ \rho & 1 \end{bmatrix}\right) \qquad |\rho| < 1.
\end{equation}
It follows that $\mu = \pi^Y_{\#} \eta = \pi^Y_{\#}\nu = \NN(0, 1)$. As $\eta^y, \nu^y$ are Gaussians, their $W_2$ distance admits a closed form and we can directly compute $W_2^{\mu, 2}(\eta, \nu) = 2(1 - \sqrt{1 - \rho^2})$. This is zero if and only if $\rho = 0$, i.e. $\eta = \nu$. 
 However, $\pi^U_{\#}\eta = \pi^U_{\#}\nu = \NN(0, 1)$ and $W_2(\pi^U_{\#}\eta, \pi^U_{\#}\nu) = 0$ regardless of $\rho$. Moreover, the unconditional distance is $W_2^2(\eta, \nu) = 2\left(2 - \sqrt{1-\rho} - \sqrt{1+\rho}\right)$, from which it is easy to verify that $W_2(\eta, \nu) \leq W_2^\mu(\eta, \nu)$. See Appendix \ref{appendix:closed_form_gaussians} for arbitrary Gaussians.

\paragraph{Conditional Wasserstein Space as a Geodesic Space.} 
We now turn our attention to the geodesics in $\P_p^\mu(Y \times U)$. In particular, we show that there exists a constant speed geodesic between any two measures in $\P_p^\mu(Y \times U)$, generalizing a similar result in the unconditional setting \citep[Theorem~5.27]{santambrogio2015optimal}. Moreover, we show that under suitable regularity assumptions, solutions to the conditional Monge problem \eqref{eqn:conditional_monge_problem} induce constant speed geodesics. Our motivation for studying geodesics in $\P_p^\mu(Y \times U)$ is practical -- in Section \ref{section:cot_flow_matching}, we show how one can model geodesics in $\P_p^\mu(Y \times U)$ in order to obtain a conditional flow-based model whose paths are easy to integrate.

A \textit{curve} is a continuous function $\gamma_{\bullet}: I \to \P_p^\mu(Y \times U)$ where $I = (a, b) \subseteq \R$ is any open interval of finite length. If $(\gamma_t)$ is an absolutely continuous curve, then its metric derivative $|\gamma^\prime|(t)$ \citep[Chapter~1]{ambrosio2005gradient} exists for almost every $t \in (a, b)$. A curve $(\gamma_t)$ is called a \textit{constant speed geodesic} if for all $a < s \leq t < b$, we have $W_p^\mu(\gamma_s, \gamma_t) = |t - s| W_p^\mu(\gamma_a, \gamma_b)$. It is straightforward to show that every constant speed geodesic is absolutely continuous.

\begin{restatable}[$\P_p^\mu(Y \times U$) is a Geodesic Space]{theorem}{theoremgeodesicspace}
\label{theorem:geodesic_space}
    For any $\eta, \nu \in \P_p^\mu(Y \times U)$, there exists a constant speed geodesic between $\eta$ and $\nu$.
\end{restatable}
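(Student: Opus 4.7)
The plan is to construct a candidate geodesic via a conditional analogue of McCann's interpolation. Given $\eta, \nu \in \P_p^\mu(Y \times U)$, I would first invoke Proposition \ref{prop:properties_of_distance}(a) together with the existence result of \citet{hosseini2023conditional}, applied to the cost $c(y_0, u_0, y_1, u_1) = \|u_0 - u_1\|_U^p$, to obtain an optimal triangular coupling $\gamma \in \Pi_Y(\eta, \nu)$. Since $\gamma$ is concentrated on $\{y_0 = y_1\}$, the map $T_t(y_0, u_0, y_1, u_1) = (y_0, (1-t) u_0 + t u_1)$ is well-defined $\gamma$-a.s., and I would set
\begin{equation}
    \gamma_t = (T_t)_\# \gamma, \qquad t \in [0, 1].
\end{equation}

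The routine checks come first. The map $T_t$ preserves the $Y$-coordinate, so $\pi^Y_\# \gamma_t = \mu$; convexity of $\|\cdot\|_U^p$ combined with the finite $p$-th moments of $\eta$ and $\nu$ places $\gamma_t$ in $\P_p^\mu(Y \times U)$; and direct inspection yields $\gamma_0 = \eta$ and $\gamma_1 = \nu$.

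The heart of the argument is a disintegration identity. Let $\gamma^y$ denote the disintegration of $\gamma$ with respect to $\mu$; by triangularity, $\gamma^y$ may be regarded as an element of $\Pi(\eta^y, \nu^y)$ on $U \times U$. A Fubini-type computation against bounded continuous test functions on $Y \times U$ identifies the disintegration of $\gamma_t$ as
\begin{equation}
    \gamma_t^y = (S_t)_\# \gamma^y, \qquad S_t(u_0, u_1) = (1-t) u_0 + t u_1,
\end{equation}
for $\mu$-a.e.\ $y$. By \citet{hosseini2023conditional}, the optimality of $\gamma$ for the conditional Kantorovich problem descends to optimality of $\gamma^y$ for the standard $W_p$ problem between $\eta^y$ and $\nu^y$ for $\mu$-a.e.\ $y$. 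The classical McCann interpolation theorem \citep[Theorem~5.27]{santambrogio2015optimal} then implies that $(\gamma_t^y)_{t \in [0,1]}$ is a constant speed geodesic in $\P_p(U)$, so $W_p(\gamma_s^y, \gamma_t^y) = |t-s|\, W_p(\eta^y, \nu^y)$ for $\mu$-a.e.\ $y$. Integrating against $\mu$ via Definition \ref{def:conditional_p_wasserstein_distance} gives
\begin{equation}
    W_p^{\mu, p}(\gamma_s, \gamma_t) = \int_Y W_p^p(\gamma_s^y, \gamma_t^y) \d \mu(y) = |t-s|^p \int_Y W_p^p(\eta^y, \nu^y) \d \mu(y) = |t-s|^p \, W_p^{\mu, p}(\eta, \nu),
\end{equation}
which is exactly the constant speed geodesic condition with endpoints $\eta$ and $\nu$.

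The main obstacle is the disintegration identity $\gamma_t^y = (S_t)_\# \gamma^y$: commuting the interpolation map with conditioning on $y$ requires care with the measurability of $y \mapsto (S_t)_\# \gamma^y$ and with the existence of regular disintegrations in the separable Hilbert setting, where one cannot simply appeal to disintegration on locally compact Polish spaces. Everything else is a clean combination of the conditional Kantorovich existence and optimality-descent results of \citet{hosseini2023conditional} with the classical unconditional McCann interpolation theorem.
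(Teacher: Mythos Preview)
Your proposal is correct and constructs the same candidate curve as the paper, but verifies the constant-speed property by a genuinely different route. You disintegrate the optimal triangular coupling into fiberwise optimal plans $\gamma^y \in \Pi(\eta^y,\nu^y)$ via the optimality-descent result of \citet{hosseini2023conditional}, apply the classical McCann interpolation theorem on each fiber $U$ to obtain $W_p(\gamma_s^y,\gamma_t^y) = |t-s|\,W_p(\eta^y,\nu^y)$, and then integrate over $\mu$. The paper instead works entirely at the level of the joint measures: it exhibits the explicit triangular coupling $(\lambda_t,\lambda_s)_{\#}\gamma^\star \in \Pi_Y(\gamma_t,\gamma_s)$ to obtain the upper bound $W_p^\mu(\gamma_s,\gamma_t) \leq |t-s|\,W_p^\mu(\eta,\nu)$ directly, and then gets the matching lower bound from the triangle inequality applied to the three segments $[\eta,\gamma_s]$, $[\gamma_s,\gamma_t]$, $[\gamma_t,\nu]$.

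Your argument is more modular---it cleanly reduces the conditional statement to the unconditional one---but it imports two nontrivial external ingredients (optimality descent and the classical displacement-interpolation theorem in a separable Hilbert space), and it requires the disintegration-commutation step you flag. The paper's argument is more self-contained: it needs only existence of an optimal triangular coupling and elementary metric reasoning, and in particular never invokes fiberwise optimality of $\gamma^y$ or the unconditional McCann theorem as a black box. The obstacle you identify (commuting $S_t$ with conditioning on $y$) is real but mild in the separable Hilbert setting, since regular disintegrations exist and the pushforward by a fixed continuous map preserves Borel measurability of $y \mapsto \gamma^y$.
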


When an optimal triangular coupling $\gamma^\star \in \Pi_Y(\eta, \nu)$ is induced by an injective triangular map $T^\star$, we may recover a constant speed geodesic in $\P_p^\mu(Y \times U)$, generalizing the McCann interpolant \citep{mccann1997convexity} to the conditional setting. We refer to Proposition \ref{prop:existence_conditional_monge} for sufficient conditions on $\eta, \nu$ under which such a $T^\star$ exists. Informally, samples from $(y_0, u_0) \sim \eta$ flow in a straight path at a constant speed to their destination $T^\star(y_0, u_0)$. 

\begin{restatable}[Conditional McCann Interpolants]{theorem}{theoremmccann}
\label{theorem:monge_map_to_vector_field}
    Fix $\eta, \nu \in \P_p^\mu(Y \times U)$. Suppose $T^\star(y, u) = (y, T^\star_\UU(y, u))$ is an injective triangular map solving the conditional Monge problem \eqref{eqn:conditional_monge_problem}. Define the maps $T_t: Y \times U \to Y \times U$ for $0 \leq t \leq 1$ via $ T_t = (1-t)I + tT^\star$, and define the curve of measures $\gamma_t = [T_t]_{\#} \eta \in \P_p^\gamma(Y \times U)$. Then,
    \begin{enumerate}[label=(\alph*), itemsep=0pt]
        \item $(\gamma_t)$ is absolutely continuous and a constant speed geodesic between $\eta, \nu$
        \item The vector field $v_t( T_t^\star(y, u) ) = (0, T_U^\star(y, u) - u)$ generates the path $\gamma_t$, in the sense that $(\gamma_t, v_t)$ solve the continuity equation \eqref{eqn:weak_continuity_equation}.
    \end{enumerate}
\end{restatable}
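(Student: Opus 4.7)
The plan is to reduce to fiberwise conditional transports, where classical McCann interpolant theory applies. Because $T^\star$ is triangular, $T_t = (1-t)I + tT^\star$ is also triangular with $T_t(y, u) = (y, (1-t)u + t\, T_U^\star(y, u))$, so $T_t$ preserves the $Y$-marginal and $\gamma_t \in \P_p^\mu(Y \times U)$. Disintegration then yields, for $\mu$-a.e.\ $y$,
\[
\gamma_t^y = \left[(1-t)I_U + t\, T_U^\star(y, \cdot)\right]_{\#}\! \eta^y,
\]
where $I_U$ is the identity on $U$. The key structural input, which I would import from \citet{hosseini2023conditional} and the discussion in Appendix \ref{appendix:cot}, is that an optimal triangular Monge map induces fiberwise optimal transport maps $T_U^\star(y, \cdot): \eta^y \to \nu^y$ for $\mu$-a.e.\ $y$. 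Hence each $\gamma_t^y$ is a classical (unconditional) McCann interpolant in $\P_p(U)$.

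For part (a), the unconditional theory immediately supplies the fiberwise constant-speed property $W_p(\gamma_s^y, \gamma_t^y) = |t-s|\, W_p(\eta^y, \nu^y)$ for $\mu$-a.e.\ $y$. Raising to the $p$th power and integrating against $\mu$, Definition \ref{def:conditional_p_wasserstein_distance} gives
\[
W_p^\mu(\gamma_s, \gamma_t)^p = \int_Y W_p^p(\gamma_s^y, \gamma_t^y)\, d\mu(y) = |t-s|^p \int_Y W_p^p(\eta^y, \nu^y)\, d\mu(y) = |t-s|^p\, W_p^\mu(\eta, \nu)^p,
\]
which is precisely the constant-speed geodesic condition; absolute continuity follows.

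For part (b), I would verify the weak continuity equation directly via the pushforward formula. For a suitable test function $\varphi$ (compactly supported smooth in the finite-dimensional case, cylindrical smooth in the Hilbert setting), the identity $\gamma_t = [T_t]_{\#} \eta$ gives
\[
\frac{d}{dt} \int \varphi\, d\gamma_t = \int \langle \nabla_U \varphi(T_t(y, u)),\, T_U^\star(y, u) - u \rangle\, d\eta(y, u) = \int \langle \nabla \varphi(z),\, v_t(z) \rangle\, d\gamma_t(z),
\]
where the second equality uses $v_t(T_t(y, u)) = (0, T_U^\star(y, u) - u)$ together with the pushforward identity. The field $v_t$ is unambiguously defined on $\mathrm{supp}(\gamma_t)$ because injectivity of $T^\star$ together with cyclical monotonicity of displacement interpolants ensures $T_t$ is injective for $t \in [0, 1]$. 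This is the weak form of \eqref{eqn:weak_continuity_equation}.

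The main obstacle I anticipate is rigorously transferring optimality from the triangular Monge problem \eqref{eqn:conditional_monge_problem} to fiberwise optimality of $T_U^\star(y, \cdot)$. This requires a disintegration-and-measurable-selection argument, and one must check its validity in the infinite-dimensional Hilbert setting under only $p$th moment hypotheses; once this is secured, everything else reduces to classical McCann theory applied pointwise in $y$. A secondary technical point is justifying differentiation under the integral in part (b) for an appropriate class of test functions characterizing the continuity equation in infinite dimensions, but this is comparatively routine.
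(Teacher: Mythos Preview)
Your argument for part (b) is essentially identical to the paper's: both differentiate $\int \varphi\, d\gamma_t = \int \varphi\circ T_t\, d\eta$ under the integral sign and then undo the pushforward to identify the field $v_t = (0, T_U^\star - I_U)\circ T_t^{-1}$.

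For part (a), however, you take a genuinely different route. You disintegrate and argue fiberwise: since $T_U^\star(y,\cdot)$ is an optimal transport map $\eta^y\to\nu^y$ for $\mu$-a.e.\ $y$ (this is indeed secured by Proposition~\ref{prop:existence_conditional_kantorovich}, so your ``main obstacle'' is already handled in the literature), the classical McCann result gives $W_p(\gamma_s^y,\gamma_t^y)=|t-s|\,W_p(\eta^y,\nu^y)$, and integrating in $y$ yields the constant-speed property directly. The paper instead proves (b) first, computes $\int_a^b\|v_t\|_{L^p(\gamma_t)}\,dt=(b-a)\,W_p^\mu(\eta,\nu)$, and then invokes its own Theorem~\ref{theorem:vf_to_ac} (the triangular-vector-field-to-absolute-continuity direction) to deduce absolute continuity and the geodesic property. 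Your approach is more elementary and avoids the continuity-equation machinery entirely for (a); the paper's approach has the virtue of exercising the abstract framework it develops and making the proof self-contained within that framework. Both need the identification of the conditional Monge cost with $W_p^\mu(\eta,\nu)$, i.e.\ that $(I,T^\star)_\#\eta$ is Kantorovich-optimal among triangular couplings, which is implicit in the setting of Proposition~\ref{prop:existence_conditional_monge}.
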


\section{Conditional Benamou-Brenier Theorem} \label{section:conditional_benamou_brenier}

In this section, we prove a characterization of the absolutely continuous curves in $\P_p^\mu(Y \times U)$. As a corollary, we obtain a conditional generalization of the Benamou-Brenier Theorem \citep{benamou2000computational}, giving us a dynamical characterization of the conditional Wasserstein distance. Roughly speaking, all such curves are generated by a vector field on $Y \times U$ which has zero velocity in the $Y$ component. This is natural, as all measures in $\P_p^\mu(Y \times U)$ have a fixed $Y$-marginal $\mu$. Such a vector field can be informally seen as tangent to a curve of measures, and is the dynamic analogue of the triangular maps discussed in Section \ref{section:background}. More formally, given an open interval $I \subseteq \R$, a time-dependent Borel vector field $v: I \times Y \times U \to Y \times U$ is said to be \textit{triangular} if there exists a Borel vector field $v^U: I \times Y \times U \to U$ such that $v_t(y, u) = \left(0, v_t^U(y, u)\right)$. 

\paragraph{Continuity Equation.}

We introduce some necessary background which allows us to link vector fields to curves of measures. The \textit{continuity equation} $\partial_t \gamma_t + \text{div}(v_t \gamma_t) = 0$ describes the evolution of a measure $\gamma_t$ which flows along a given vector field $v_t$ \citep[Chapter~8]{ambrosio2005gradient}. This equation must be understood distributionally, i.e. for every $\varphi$ in an appropriate space of test functions,

\begin{equation} \label{eqn:weak_continuity_equation}
    \int_I \int_{Y \times U} \left( \partial_t \varphi(y, u, t) + \langle v_t(y, u), \nabla_{y,u} \varphi(y, u, t) \rangle\right) \d \gamma_t(y, u) \d t = 0.
\end{equation}

We consider cylindrical test functions $\varphi \in \text{Cyl}(Y \times U \times I)$, i.e. of the form $\varphi(y, u, t) = \psi(\pi^d(y, u), t)$ where $\pi^d: Y \times U \to \R^d$ maps $(y, u) \to (\langle (y, u), e_1 \rangle, \dots, \langle (y, u), e_d \rangle)$ where $\{ e_1, e_2, \dots, e_d \}$ is any orthonormal family in $Y \times U$. In the finite dimensional setting, one may take $\varphi \in C_c^\infty(Y \times U)$ to be smooth and compactly supported \citep[Remark~8.1.1]{ambrosio2005gradient}. 

In Appendix \ref{appendix:proofs2}, we prove Lemma \ref{lemma:joint_ce_to_conditional_ce}, which is key in proving Theorem \ref{theorem:vf_to_ac} below. Informally, Lemma \ref{lemma:joint_ce_to_conditional_ce} states that if \eqref{eqn:weak_continuity_equation} is satisfied for a joint distribution and triangular vector field, then the continuity equation is also satisfied for the corresponding conditional distributions and $U$ components of the vector field.

\begin{restatable}[Triangular Vector Fields Preserve Conditionals]{lemma}{lemmatriangularcontinuity}
\label{lemma:joint_ce_to_conditional_ce}
    Suppose $v_t(y, u) = (0, v_t^U(y, u))$ is triangular and that $(\gamma_t) \subset \P_p^\mu(Y \times U)$ is a path of measures such that $(v_t, \gamma_t)$ satisfy the continuity equation in the distributional sense.
    Then, it follows that for $\mu$-almost every $y \in Y$, we have $\partial_t \gamma_t^y + \nabla \cdot (v_t^U(y, \blank) \gamma_t^y) = 0$.
\end{restatable}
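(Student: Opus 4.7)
The strategy is to test the joint continuity equation \eqref{eqn:weak_continuity_equation} against cylindrical functions of product form $\varphi(y, u, t) = \chi(y)\,\xi(u, t)$, where $\chi$ is a cylindrical test function on $Y$ alone and $\xi$ is a cylindrical test function on $U \times I$. Such $\varphi$ are legitimate elements of $\mathrm{Cyl}(Y \times U \times I)$: one chooses an orthonormal family in $Y \times U$ consisting of $k$ vectors from $Y \times \{0\}$ and $m$ vectors from $\{0\} \times U$, and a smooth compactly supported function of the resulting $k+m+1$ scalar arguments that separates as a product. Because $v_t = (0, v_t^U)$ is triangular, $\nabla_y \varphi$ never pairs with $v_t$ and one obtains the pointwise identity
\begin{equation*}
\partial_t \varphi(y,u,t) + \langle v_t(y,u), \nabla_{y,u}\varphi(y,u,t)\rangle = \chi(y)\bigl(\partial_t \xi(u,t) + \langle v_t^U(y,u), \nabla_u \xi(u,t)\rangle\bigr).
\end{equation*}

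The next step is to disintegrate $\gamma_t = \gamma_t^y \otimes \mu$, which is valid since $\pi^Y_\#\gamma_t = \mu$ for every $t$, and apply Fubini to reorganize the integrals. Defining
\begin{equation*}
F_\xi(y) \;:=\; \int_I \int_U \bigl(\partial_t \xi(u,t) + \langle v_t^U(y,u), \nabla_u \xi(u,t)\rangle\bigr)\,d\gamma_t^y(u)\,dt,
\end{equation*}
the distributional continuity equation rewrites as $\int_Y \chi(y)\,F_\xi(y)\,d\mu(y) = 0$ for every admissible $\chi$. Since cylindrical test functions on $Y$ are dense in a class that separates measures in $\mathcal{P}(Y)$, this forces $F_\xi(y) = 0$ for $\mu$-a.e.\ $y$, with the null set a priori depending on $\xi$.

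To conclude \emph{for a single} null set that the conditional continuity equation $\partial_t \gamma_t^y + \nabla\cdot(v_t^U(y,\cdot)\gamma_t^y) = 0$ holds in the distributional sense on $U \times I$, I would fix a countable family $\{\xi_n\}$ of cylindrical test functions that is dense (in an appropriate $C^1$-type sense) in $\mathrm{Cyl}(U \times I)$, using separability of $U$; take the countable intersection of the full-$\mu$-measure sets $\{F_{\xi_n} = 0\}$; and then extend by continuity in $\xi$, using that both terms defining $F_\xi$ depend continuously on $\xi$ when $(\gamma_t,v_t^U)$ satisfies mild integrability.

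The main technical obstacles I expect are (i) verifying the Fubini step, which requires joint measurability and integrability of the integrand $\chi(y)(\partial_t \xi + \langle v_t^U, \nabla_u \xi\rangle)$ against $d\gamma_t\,dt$, and (ii) the upgrade from a $\xi$-dependent exceptional set to a $\xi$-independent one. Obstacle (i) is handled by the compact support of $\xi$ in $u$ and $t$, boundedness of $\chi$, and the standing assumption $(\gamma_t)\subset \mathcal{P}_p^\mu(Y \times U)$, which gives local integrability of $v_t^U$ after a routine truncation. Obstacle (ii) is the genuinely delicate one: it rests on the measurable dependence $y \mapsto \gamma_t^y$ supplied by the disintegration theorem \citep[Chapter~10]{bogachev2007measure}, separability of the space of cylindrical test functions on $U \times I$, and an approximation/continuity argument ensuring that the bilinear pairing $\xi \mapsto F_\xi(y)$ is continuous on a suitable dense subspace for every $y$ in the intersection of the $\mu$-full-measure sets.
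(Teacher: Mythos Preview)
Your proposal is correct and follows essentially the same approach as the paper: test the joint continuity equation against product cylindrical functions $\chi(y)\,\xi(u,t)$, exploit the triangular structure to kill the $Y$-gradient term, disintegrate $\gamma_t$ along $Y$, and conclude that the resulting functional $F_\xi(y)$ vanishes for $\mu$-a.e.\ $y$. The paper carries out the last step by approximating indicators of finite-dimensional rectangles by smooth $\chi$ and then using that cylinder sets generate the Borel $\sigma$-algebra on the separable space $Y$, rather than invoking a density/separation argument directly; it also does not explicitly address the upgrade to a $\xi$-independent null set that you correctly flag as obstacle~(ii).
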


\paragraph{Absolutely Continuous Curves.}

In this section, we state our characterization of absolutely continuous curves in $\P_p^\mu(Y \times U)$. Informally, given such a curve, Theorem \ref{theorem:ac_to_vf} provides us with a triangular vector field which generates the curve, in the sense that the pair solve the continuity equation. 

\begin{restatable}[Absolutely Continuous Curves in $\P_p^\mu(Y \times U)$]{theorem}{theoremactovf}
\label{theorem:ac_to_vf}
    Let $I \subset \R$ be an open interval, and suppose $\gamma_t: I \to \P_p^\mu(Y \times U)$ is an absolutely continuous in the $W_p^\mu$ metric with $|\gamma^\prime|(t) \in L^1(I)$. Then, there exists a Borel vector field $v_t(y, u)$ such that
    \begin{enumerate}[label=(\alph*), itemsep=0pt]
        \item $v_t$ is triangular
        \item $v_t \in L^p(\gamma_t, Y \times U)$ and $\norm{v_t}_{L^p(\gamma_t, Y \times U)} \leq |\gamma^\prime|(t)$ for a.e. $t$
        \item $(v_t, \gamma_t)$ solve the continuity equation distributionally.
    \end{enumerate}
\end{restatable}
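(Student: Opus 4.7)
The plan is to reduce the problem to the classical characterization of absolutely continuous curves in the unconditional Wasserstein space (Ambrosio-Gigli-Savar\'e, Theorem 8.3.1), applied conditionally after disintegrating $\gamma_t$ along its fixed $Y$-marginal $\mu$. Since every measure in $\P_p^\mu(Y \times U)$ has the same $Y$-marginal, the triangular vector field we seek should take the form $v_t(y, u) = (0, v_t^U(y, u))$, where $v_t^U(y, \cdot)$ generates the conditional curve $t \mapsto \gamma_t^y$ in the classical sense.

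The first step is to show that for $\mu$-almost every $y \in Y$, the conditional curve $t \mapsto \gamma_t^y$ is absolutely continuous in $(\P_p(U), W_p)$, with
\begin{equation*}
    \int_Y |(\gamma^y)'|^p(t) \, \d\mu(y) \leq |\gamma'|_{W_p^\mu}^p(t) \qquad \text{for a.e.\ } t \in I.
\end{equation*}
The key identity is $W_p^\mu(\gamma_s, \gamma_t)^p = \int_Y W_p^p(\gamma_s^y, \gamma_t^y) \, \d\mu(y)$ from Definition~\ref{def:conditional_p_wasserstein_distance}. Using the $W_p^\mu$-absolute continuity bound $W_p^\mu(\gamma_s, \gamma_t) \leq \int_s^t m(r) \, \d r$ applied across a countable family of partitions of $I$, combined with Minkowski's inequality in $L^p(\mu)$ to exchange the order of summation over the partition and integration over $\mu$, one obtains a $\mu$-a.e.\ pointwise control on the total variation of $t \mapsto \gamma_t^y$. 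A Fatou argument applied to difference quotients then upgrades this to pointwise absolute continuity for $\mu$-a.e.\ $y$ and yields the displayed $L^p(\mu)$-bound on the conditional metric derivatives.

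Having established this, the classical AGS theorem applied to each $\gamma_t^y$ yields a Borel vector field $v_t^U(y, \cdot): I \times U \to U$ solving $\partial_t \gamma_t^y + \nabla_u \cdot (v_t^U(y, \cdot) \gamma_t^y) = 0$ distributionally with $\|v_t^U(y, \cdot)\|_{L^p(\gamma_t^y)} \leq |(\gamma^y)'|(t)$ for a.e.\ $t$. A measurable selection argument, e.g.\ leveraging the explicit discrete construction of the vector field in AGS or applying a selection theorem to the candidate set of tangent fields, guarantees joint Borel measurability of $v_t^U(y, u)$ in $(t, y, u)$. Setting $v_t := (0, v_t^U)$ then yields (a) by construction, (b) by disintegration of the $L^p$-norm combined with the bound from the first step, and (c) by testing the joint continuity equation against a cylindrical function $\varphi$, disintegrating the inner integral along $\mu$, and invoking the conditional continuity equation for $\mu$-a.e.\ $y$; constancy of $\mu$ in $t$ ensures that no extra $\partial_t$-contribution arises from differentiating under $\d \mu(y)$.

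The main obstacle will be the first step, namely the transfer of absolute continuity from the joint curve in $W_p^\mu$ to the conditional curves in $W_p$: $L^p(\mu)$-bounds on displacements do not immediately yield pointwise-in-$y$ absolute continuity, so the argument requires combining bounded-variation-in-$L^p(\mu)$ control over a countable dense set of partitions with a Fatou step to upgrade to absolute continuity on a full-$\mu$-measure set of $y$. The measurable selection in the second step is standard but technical, especially in the Hilbert (possibly infinite-dimensional) setting. Once both are in place, assembling the triangular vector field and verifying the three conclusions amounts to routine disintegration bookkeeping.
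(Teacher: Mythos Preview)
Your plan is conceptually natural but takes a genuinely different route from the paper, and two of its steps carry real technical weight that your sketch does not discharge. The paper does \emph{not} disintegrate and apply the unconditional result fiberwise. Instead it reruns the Ambrosio--Gigli--Savar\'e argument directly on the joint curve $(\gamma_t)$, but using optimal \emph{triangular} couplings $\gamma_{st}\in\Pi_Y(\gamma_s,\gamma_t)$ in place of ordinary optimal couplings. Because such couplings are supported on $\{y_0=y_1\}$, the difference-quotient estimate for $t\mapsto \gamma_t(\varphi)$ produces a bound involving only $\|\nabla_u\varphi\|_{L^q(\gamma_t)}$ rather than the full gradient. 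One then defines the linear functional $L(\nabla_u\varphi)=-\int\partial_t\varphi\,\d\gamma$ on the subspace $V=\{\iota(\nabla_u\varphi):\varphi\in\mathrm{Cyl}\}$, extends it to $\overline V$, and obtains $v=j_q(w)$ from the minimizer of a single convex problem. Triangularity of $v_t$ is then automatic (it lies in $\overline V$), joint Borel measurability comes for free, and the norm bound follows from the same localization-in-$t$ trick as in the classical proof.

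Your approach trades this one clean argument for two delicate ones. First, the passage from $W_p^\mu$-absolute continuity of $(\gamma_t)$ to $W_p$-absolute continuity of $t\mapsto\gamma_t^y$ for $\mu$-a.e.\ $y$: Minkowski over partitions gives $\|\mathrm{Var}_{[a,b]}(\gamma_\bullet^{\,\cdot})\|_{L^p(\mu)}\le\int_a^b m$, hence BV fibers $\mu$-a.e., and Fatou on difference quotients controls the \emph{lower} metric derivative in $L^p(\mu)$; but neither yields pointwise absolute continuity without an additional argument (e.g.\ showing that the monotone map $t\mapsto \mathrm{Var}_{[a,t]}(\gamma_\bullet^y)$, which is AC as a curve into $L^p(\mu)$, has AC fibers---this needs a careful Bochner/Fubini step and a consistent choice of disintegration representatives across $t$). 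Second, even granting fiberwise AC, the AGS vector field $v_t^U(y,\cdot)$ is produced by a Hahn--Banach/minimization argument for each fixed $y$, and joint Borel measurability in $(t,y,u)$ is not automatic; you would need either to revisit the discrete construction uniformly in $y$ or invoke a selection theorem for the tangent-field correspondence, neither of which is routine in the infinite-dimensional Hilbert setting. Both hurdles are likely surmountable, but the paper's direct route sidesteps them entirely.
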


Conversely, we show in Theorem \ref{theorem:vf_to_ac} that if the pair $(\gamma_t, v_t)$ solve the continuity equation and $v_t$ is triangular, then the curve $(\gamma_t)$ is absolutely continuous and $|\gamma^\prime(t)| \leq \norm{v_t}_{L^p(\gamma_t, Y \times U)}$. The main technique of this result is to study the collection of \textit{conditional} continuity equations (which is feasible by Lemma \ref{lemma:joint_ce_to_conditional_ce}) and to apply the converse of \citet[Theorem~8.3.1]{ambrosio2005gradient}. In this setting, the infinite-dimensional result is obtained via a finite-dimensional approximation argument.

\begin{restatable}[Continuous Curves Generated by Triangular Vector Fields]{theorem}{theoremvftoac}
\label{theorem:vf_to_ac}
     Suppose that $\gamma_t: I \to \P_p^\mu(Y \times U)$ is narrowly continuous and $(v_t)$ is a triangular vector field such that $(\gamma_t, v_t)$ solve the continuity equation with $\norm{v_t}_{L^p(\gamma_t, Y \times U)} \in L^1(I)$. Then, $\gamma_t: I \to \P_p^\mu(Y \times U)$ is absolutely continuous in the $W_p^\mu$ metric and $|\gamma^\prime|(t) \leq \norm{v_t}_{L^p(\mu_, Y \times U)}$ for almost every $t$.
\end{restatable}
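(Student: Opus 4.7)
The strategy is to apply Lemma \ref{lemma:joint_ce_to_conditional_ce} to reduce the joint continuity equation to a $\mu$-indexed family of continuity equations on $U$, invoke the converse direction of \citet[Theorem~8.3.1]{ambrosio2005gradient} on each fiber, and reassemble the fiberwise bounds via Minkowski's integral inequality.

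Since $v_t(y,u) = (0, v_t^U(y,u))$ is triangular and $(\gamma_t, v_t)$ solve the continuity equation, Lemma \ref{lemma:joint_ce_to_conditional_ce} gives, for $\mu$-a.e. $y$, that $(\gamma_t^y, v_t^U(y, \blank))$ solves the continuity equation on $U$. Applying Fubini to $\int_I \norm{v_t}^p_{L^p(\gamma_t, Y\times U)} \d t < \infty$ yields the fiberwise integrability $\norm{v_t^U(y, \blank)}_{L^p(\gamma_t^y, U)} \in L^1(I)$ for $\mu$-a.e. $y$. A narrowly continuous representative of $t \mapsto \gamma_t^y$ exists for $\mu$-a.e. $y$ as a consequence of the conditional continuity equation together with this $L^1$-integrability, in the spirit of \citet[Lemma~8.1.2]{ambrosio2005gradient}.

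In the finite-dimensional case, the converse of \citet[Theorem~8.3.1]{ambrosio2005gradient} then gives, for $\mu$-a.e. $y$, that $t \mapsto \gamma_t^y$ is absolutely continuous in $W_p$ with $W_p(\gamma_s^y, \gamma_t^y) \leq \int_s^t \norm{v_r^U(y, \blank)}_{L^p(\gamma_r^y, U)} \d r$. Raising to the $p$th power, integrating against $\mu$, and applying Minkowski's integral inequality yields
\begin{equation*}
W_p^\mu(\gamma_s, \gamma_t) \leq \int_s^t \Bigl( \int_Y \norm{v_r^U(y, \blank)}^p_{L^p(\gamma_r^y, U)} \d\mu(y) \Bigr)^{1/p} \d r = \int_s^t \norm{v_r}_{L^p(\gamma_r, Y\times U)} \d r,
\end{equation*}
where the last equality combines Fubini with the fact that $v_r$ has zero $Y$-component. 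Absolute continuity of $\gamma_t$ in $W_p^\mu$ is immediate, and the metric derivative bound follows by dividing by $|t-s|$ and passing to the limit at Lebesgue points of the right-hand side.

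The main obstacle is the infinite-dimensional extension, since \citet[Theorem~8.3.1]{ambrosio2005gradient} is classically stated in Euclidean space. The plan is to pass through finite-dimensional projections: for each finite orthonormal family in $U$ inducing a projection $\pi^d: U \to \R^d$, the pair $(\pi^d_{\#}\gamma_t^y, \pi^d v_t^U(y, \blank))$ solves the finite-dimensional continuity equation against cylindrical test functions, so the preceding argument yields a bound on $W_p(\pi^d_{\#}\gamma_s^y, \pi^d_{\#}\gamma_t^y)$. Taking a supremum over a countable nested sequence of projections onto an orthonormal basis of $U$ and invoking lower semicontinuity of $W_p$ under narrow convergence, together with monotone/dominated convergence to pass the supremum through the right-hand side, recovers the fiberwise bound on $U$. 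This mirrors the approximation strategy alluded to in the paragraph preceding the theorem statement.
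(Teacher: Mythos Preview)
Your proposal follows essentially the same route as the paper: reduce to fiberwise continuity equations via Lemma~\ref{lemma:joint_ce_to_conditional_ce}, invoke \citet[Theorem~8.3.1]{ambrosio2005gradient} on each fiber (using \citet[Lemma~8.1.2]{ambrosio2005gradient} for narrowly continuous representatives), integrate over $\mu$, and handle the infinite-dimensional case by finite-dimensional projection. The only substantive difference is cosmetic: you use the bound $W_p(\gamma_s^y,\gamma_t^y) \le \int_s^t \norm{v_r^U(y,\blank)}_{L^p(\gamma_r^y)}\,\d r$ and then Minkowski's integral inequality, whereas the paper uses the equivalent H\"older-form bound $W_p^p(\gamma_s^y,\gamma_t^y) \le (t-s)^{p-1}\int_s^t \norm{v_r^U(y,\blank)}^p_{L^p(\gamma_r^y)}\,\d r$ and integrates directly.

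One slip to fix: you write ``Applying Fubini to $\int_I \norm{v_t}^p_{L^p(\gamma_t, Y\times U)}\,\d t < \infty$,'' but the hypothesis only gives $\norm{v_t}_{L^p(\gamma_t)} \in L^1(I)$, not $L^p(I)$. To obtain the fiberwise integrability $\norm{v_t^U(y,\blank)}_{L^p(\gamma_t^y)} \in L^1(I)$ for $\mu$-a.e.\ $y$, you need Jensen's inequality (as the paper does): since $\norm{v_t}_{L^p(\gamma_t)} = \bigl(\int_Y \norm{v_t^U(y,\blank)}^p_{L^p(\gamma_t^y)}\,\d\mu(y)\bigr)^{1/p} \ge \int_Y \norm{v_t^U(y,\blank)}_{L^p(\gamma_t^y)}\,\d\mu(y)$, integrating over $I$ and applying Fubini gives the claim.
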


As a corollary of Theorem \ref{theorem:ac_to_vf} and Theorem \ref{theorem:vf_to_ac}, we obtain a conditional version of the Benamou-Brenier theorem \citep{benamou2000computational}. The proof of Theorem \ref{theorem:conditional_benamou_brenier} largely follows the unconditional case (see e.g. \citet[Chapter~8]{ambrosio2005gradient}), but we include it for the sake of completeness.

\begin{restatable}[Conditional Benamou-Brenier]{theorem}{theoremconditionalbb}
\label{theorem:conditional_benamou_brenier}
    Let $1 < p < \infty$. For any $\eta, \nu \in \P_p^\mu(Y \times U)$, we have
    \begin{equation*} \label{eqn:conditional_bb}
        W_p^{p,\mu}(\eta, \nu) = \min_{(\gamma_t, v_t)} \left\{ \int_0^1 \norm{v_t}^p_{L^p(\mu_t)} \mid \text{ $(v_t, \gamma_t)$ solve \eqref{eqn:weak_continuity_equation}, $\gamma_0 = \eta, \gamma_1 = \nu$, and $v_t$ is triangular}  \right\}.
    \end{equation*}
\end{restatable}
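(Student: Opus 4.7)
The plan is to prove the equality by establishing the two inequalities separately, in direct analogy with the unconditional Benamou-Brenier argument, relying on the characterization of absolutely continuous curves already developed (Theorems \ref{theorem:ac_to_vf} and \ref{theorem:vf_to_ac}) and the existence of constant-speed geodesics (Theorem \ref{theorem:geodesic_space}).

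For the lower bound ($\geq$), I would take any admissible pair $(\gamma_t, v_t)$ with $v_t$ triangular and $\gamma_0 = \eta$, $\gamma_1 = \nu$. Without loss of generality we may assume $\int_0^1 \|v_t\|_{L^p(\gamma_t)}^p \d t < \infty$, which in particular gives $\|v_t\|_{L^p(\gamma_t)} \in L^1(0,1)$. Applying Theorem \ref{theorem:vf_to_ac}, the curve $(\gamma_t)$ is absolutely continuous in $W_p^\mu$ with metric derivative bounded by $\|v_t\|_{L^p(\gamma_t)}$ for a.e.\ $t$. Using absolute continuity to bound the distance by the integral of the metric derivative, followed by Jensen's (equivalently Hölder's) inequality on $[0,1]$, gives
\begin{equation*}
    W_p^\mu(\eta, \nu) \;\leq\; \int_0^1 |\gamma'|(t)\,\d t \;\leq\; \int_0^1 \|v_t\|_{L^p(\gamma_t)}\,\d t \;\leq\; \left(\int_0^1 \|v_t\|_{L^p(\gamma_t)}^p\,\d t\right)^{1/p},
\end{equation*}
and raising to the $p$th power yields $W_p^{p,\mu}(\eta,\nu) \leq \int_0^1 \|v_t\|_{L^p(\gamma_t)}^p \d t$, hence the same bound for the infimum.

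For the upper bound ($\leq$) and attainment, I would invoke Theorem \ref{theorem:geodesic_space} to obtain a constant-speed geodesic $(\gamma_t^\star)$ connecting $\eta$ and $\nu$. By definition of constant-speed geodesic in the $W_p^\mu$ metric, this curve is absolutely continuous with $|\gamma^{\star\prime}|(t) = W_p^\mu(\eta,\nu)$ for a.e.\ $t$. Theorem \ref{theorem:ac_to_vf} then furnishes a triangular Borel vector field $v_t^\star$ with $(\gamma_t^\star, v_t^\star)$ solving the continuity equation \eqref{eqn:weak_continuity_equation} and $\|v_t^\star\|_{L^p(\gamma_t^\star)} \leq |\gamma^{\star\prime}|(t) = W_p^\mu(\eta,\nu)$ for a.e.\ $t$. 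This pair is admissible, so
\begin{equation*}
    \inf_{(\gamma_t, v_t)} \int_0^1 \|v_t\|_{L^p(\gamma_t)}^p \d t \;\leq\; \int_0^1 \|v_t^\star\|_{L^p(\gamma_t^\star)}^p \d t \;\leq\; W_p^{p,\mu}(\eta,\nu).
\end{equation*}
Combined with the lower bound, all inequalities collapse to equalities, which simultaneously proves the identity and shows that the infimum is attained by $(\gamma_t^\star, v_t^\star)$, so we may write $\min$ in place of $\inf$.

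I do not expect any single step to be a serious obstacle, since the heavy machinery (existence of geodesics, the equivalence between absolutely continuous curves and triangular vector fields solving the continuity equation) has already been set up. The only subtle point is making sure that the Jensen step and the absolute-continuity-to-distance step are justified under the standing integrability assumption $\|v_t\|_{L^p(\gamma_t)} \in L^1(I)$ required by Theorem \ref{theorem:vf_to_ac}; this is automatic once we restrict attention to admissible pairs with finite action by Hölder's inequality on $[0,1]$. This is why the statement requires $1 < p < \infty$, so that the Hölder conjugate is finite and Jensen applies in the usual form.
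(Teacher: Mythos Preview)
Your proposal is correct and mirrors the paper's own proof essentially step for step: both directions invoke exactly the same ingredients (Theorem~\ref{theorem:vf_to_ac} plus H\"older for the lower bound, Theorem~\ref{theorem:geodesic_space} followed by Theorem~\ref{theorem:ac_to_vf} for the upper bound and attainment). The only cosmetic difference is that the paper additionally remarks that along the geodesic one in fact has $\|v_t^\star\|_{L^p(\gamma_t^\star)} = |\gamma^{\star\prime}|(t)$ by reapplying Theorem~\ref{theorem:vf_to_ac}, but this is not needed for the argument and your inequality version suffices.
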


\section{COT Flow Matching}
\label{section:cot_flow_matching}

We have thus far seen that the COT problem \eqref{eqn:conditional_kantorovich_problem} admits a dynamical formulation by Theorem \ref{theorem:conditional_benamou_brenier}, where one may take the underlying vector fields to be triangular. We use these results to design a principled model for conditional generation based on flow matching \citep{lipman2022flow, albergo2023stochastic, tong2023improving}. We hereafter use the squared-distance cost (i.e. $p=2$). 

\begin{figure}
    \centering
    \includegraphics[width=\textwidth]{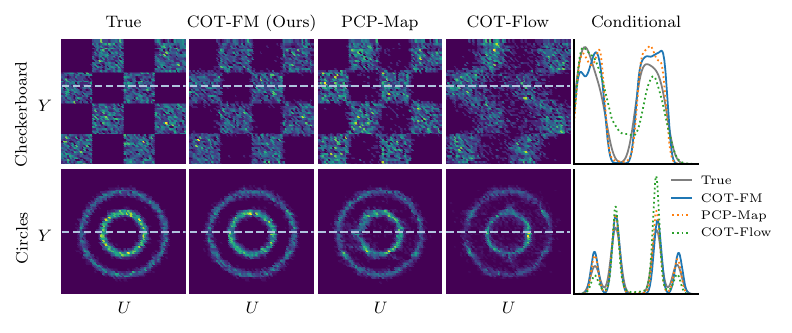}
    \caption{Samples from the ground-truth joint target distribution and the various models. Samples from COT-FM more closely match the ground-truth distribution than the baselines. In the final column, we plot conditional KDEs for samples drawn conditioned on the $y$ value indicated by the dashed horizontal line. See Appendix \ref{appendix:experiment_details} for a larger figure and additional results.}
    \label{fig:2d_plot}
\end{figure}

\paragraph{Flow Matching.}
We assume that we have access to samples $z_0 = (y_0, u_0) \sim \eta(y_0, u_0) \in {\P_p^\mu(Y \times U)}$ from a source measure, and samples $z_1 = (y_1, u_1) \sim \nu(y_1, u_1) \in \P_p^\mu(Y \times U)$ from a target measure. Let $z = (z_0, z_1) \sim \rho(z_0, z_1) \in \Pi(\eta, \nu)$ be any coupling of the source and target measure. We specify a collection of measures and vector fields on $Y \times U$ via

\begin{equation} \label{eqn:conditional_fm_measure_and_vectorfield}
    \gamma_t(y, u \mid z) = \NN\left(y, u \mid t z_1 + \left(1-t\right)z_0, C\right) \qquad  v_t(y, u \mid z) = z_1 - z_0
\end{equation}

where $C$ is any trace-class covariance operator \citep{da2014stochastic}. 

As is standard in flow matching \citep{lipman2022flow, kerrigan2023functional}, we obtain from Equations \eqref{eqn:conditional_fm_measure_and_vectorfield} a marginal measure $\gamma_t(y, u)$ and vector field $v_t(y, u)$ satisfying the continuity equation via
\begin{equation} \label{eqn:marginal_fm_measure}
    \gamma_t(y, u) = \int_{(Y \times U)^2} \gamma_t(y, u \mid z) \d \rho(z) \qquad v_t(y, u) = \int_{(Y \times U)^2} v_t(y, u \mid z) \frac{\d \gamma_t(y, u \mid z)}{\d \gamma_t(y, u)} \d \rho(z).
\end{equation}

This marginal path $(\gamma_t)_{t=0}^1$ interpolates between the source measure ($t=0$) and a smoothed version of the target measure ($t=1$). To transform source samples from $\eta$ into target samples from $\nu$, we seek to learn the intractable vector field $v_t(y, u)$ with a model $v^\theta(t, y, u)$ by minimizing the loss\footnote{Previous work has referred to this as the \textit{conditional flow matching loss} \citep{tong2023improving}, which is not to be confused with the notion of conditioning that we focus on in this work.}
\begin{equation}\label{eq:cfm_loss}
    \LL(\theta) = \E_{t, \rho(z), \gamma_t(y, u \mid z)} \norm{ v^\theta(t, y, u) - v_t(y, u \mid z)}^2
\end{equation}

which has the same $\theta$-gradient as the MSE loss to the true vector field $u_t(y, u)$ \citep[]{tong2023improving}.

\paragraph{COT Flow Matching.}
In the preceding section, $\rho(z)$ may be an arbitrary coupling between $\eta$ and $\nu$. Motivated by Proposition \ref{prop:triangular_maps_couple_conditionals}, we will choose $\rho$ to be a COT coupling. Under sufficient regularity conditions (see Appendix \ref{appendix:cot}), this COT plan will be induced by a triangular map. In turn, Theorem \ref{theorem:monge_map_to_vector_field} gives us that this triangular map is generated by a triangular vector field of the form \eqref{eqn:conditional_fm_measure_and_vectorfield}. Thus, we parametrize our model $u^\theta$ to also be triangular. Moreover, we recover the optimal dynamic transport given in Theorem \ref{theorem:conditional_benamou_brenier} as $\text{Tr}(C) \to 0$ by a pointwise application of \citep[Proposition~3.4]{tong2023improving}.

Given a collection of samples $\{z_0^i, z_1^i \}_{i=1}^n$ drawn from $\eta$ and $\nu$, we approximate a conditional optimal coupling $\rho$ using standard numerical techniques with the cost function $c_\epsilon(y_0, u_0, y_1, u_1) = |y_1 - y_0|^2 + \epsilon |u_1 - u_0|^2$ for some $0 < \epsilon \ll 1$. Intuitively, such a cost penalizes mass transfer along the $Y$ dimension, which is precisely the constraint sought in the COT problem \eqref{eqn:conditional_kantorovich_problem}. \citet[Prop.~3.11]{hosseini2023conditional} show that as $\epsilon \downarrow 0$, we recover the true optimal triangular map. 

The COT coupling can either be precomputed for small datasets or computed on each minibatch drawn during training.

After training, we obtain a learned triangular vector field $v^\theta(t, y, u)$. Given an arbitrary fixed $y \in Y$, we may approximately sample from the target $\nu(u \mid y)$ by sampling $u_0 \sim \eta(u_0 \mid y)$ and numerically solving the corresponding flow equation $\partial_t (y, u_t) = v^\theta(t, y, u_t)$ with initial condition $(y, u_0)$.

\paragraph{Source Measure.} Our framework is agnostic to the choice of source measure $\eta$, allowing for flexibility in the modeling process. The main requirement is that the $Y$-marginals of the source $\eta$ and target $\eta$ must match. In some scenarios, this is trivially satisfied. If one is interested in using a source distribution which is simply random noise, one may take $\eta(y_0, u_0) = \pi^Y_{\#}\nu (y_0) \otimes \eta_U(u_0)$ to be the product of two independent distributions where $\eta_U$ is arbitrary, e.g. Gaussian noise. 

\section{Experiments}
\label{section:experiments}

\begin{table*}[]
\centering
\caption{Distances between the ground-truth and generated joint distributions for the 2D datasets. Our method (COT-FM) obtains lower distances than the considered baselines. Average results $\pm$ one standard deviation are reported across five test sets, with the lowest average distance in bold.}
\label{tab:2d_table}
\resizebox{\textwidth}{!}{%
\addtolength{\tabcolsep}{-0.2em}
\begin{tabular}{l| llllllll}
\toprule
 & \multicolumn{2}{c}{Checkerboard} & \multicolumn{2}{c}{Moons} & \multicolumn{2}{c}{Circles} & \multicolumn{2}{c}{Swissroll} \\ 
 & $W_2$ (1e-2) & MMD (1e-3) & $W_2$ (1e-2) & MMD (1e-3) & $W_2$ (1e-2) & MMD (1e-3) & $W_2$ (1e-2) & MMD (1e-3) \\
\midrule
PCP-Map  & $6.27 {\scriptstyle\pm 0.81}$ & $0.21 {\scriptstyle\pm 0.13}$  & $8.44 {\scriptstyle\pm 1.09}$ & $0.22 {\scriptstyle\pm 0.10}$  & $6.19 {\scriptstyle\pm 0.43}$   & $\mathbf{0.20} {\scriptstyle\pm 0.17}$   & $5.35 {\scriptstyle\pm 0.93}$     & $0.16 {\scriptstyle\pm 0.13}$ \\
COT-Flow  & $8.20 {\scriptstyle\pm 0.49}$ & $0.26 {\scriptstyle\pm 0.16}$ & $18.49 {\scriptstyle\pm 2.22}$  & $1.32 {\scriptstyle\pm 0.79}$ & $10.04 {\scriptstyle\pm 1.69}$ & $0.24 {\scriptstyle\pm 0.22}$  & $6.47 {\scriptstyle\pm 0.69}$ & $0.19 {\scriptstyle\pm 0.19}$ \\
FM & $8.81 {\scriptstyle\pm 0.58}$ & $0.24 {\scriptstyle\pm 0.20}$ & $15.55 {\scriptstyle\pm 0.77}$ & $1.85 {\scriptstyle\pm 0.22}$ & $7.03 {\scriptstyle\pm 0.17}$ & $0.45 {\scriptstyle\pm 0.11}$ & $8.18 {\scriptstyle\pm 0.34}$ & $0.58 {\scriptstyle\pm 0.09}$ \\
COT-FM (Ours) & $ \mathbf{4.69} {\scriptstyle\pm 1.00} $ & $\mathbf{0.17} {\scriptstyle\pm 0.13}$ & $\mathbf{6.50} {\scriptstyle\pm 1.41} $ & $\mathbf{0.13} {\scriptstyle\pm 0.10}$ & $\mathbf{5.56} {\scriptstyle\pm 0.43} $ & $\mathbf{0.20} {\scriptstyle\pm 0.04}$  & $\mathbf{4.64} {\scriptstyle\pm 1.26} $   & $\mathbf{0.15} {\scriptstyle\pm 0.19}$  \\
\bottomrule
\end{tabular}
}
\end{table*}

\begin{wraptable}{r}{0.4\textwidth}
\centering
\vspace{-0.3cm}
\caption{Statistical distances between MCMC and posterior samples $u \sim {\nu(u \mid y)}$ for each method on the LV dataset. Average results $\pm$ one standard deviation reported across five test sets.}
\label{tab:lv_table}
\resizebox{0.4\textwidth}{!}{%
\small
\addtolength{\tabcolsep}{-0.2em}
\begin{tabular}{l | ll }
\toprule
       & $W_2$ (1e-2) & MMD (1e-3) \\
\midrule
PCP-Map  & $5.04 {\scriptstyle\pm 0.05}$  &  $2.67 {\scriptstyle\pm 2.1 } $  \\
COT-Flow & $4.86 {\scriptstyle\pm 1.1}$   & $\mathbf{0.83} {\scriptstyle\pm 0.50}$    \\
FM  & $11.41 {\scriptstyle\pm 0.26}$ & $2.65 {\scriptstyle\pm 0.14}$ \\
COT-FM (Ours)  & $\mathbf{4.02} {\scriptstyle\pm 0.06}$ &  $0.95 {\scriptstyle\pm 0.03}$  \\
\bottomrule

\end{tabular}
}
\end{wraptable}

We now illustrate our methodology (COT-FM) on a variety of conditional simulation tasks. We compare our method against several competitive baselines, namely PCP-Map \citep{wang2023efficient}, COT-Flow \citep{wang2023efficient}, and WaMGAN \citep{hosseini2023conditional}. These baselines are chosen as they reflect current state-of-the-art approaches to learning COT maps. We additionally compare against flow matching \citep{lipman2022flow, wildberger2024flow} without COT, i.e. where the coupling between the source and target measures is the independent coupling $\rho(z_0, z_1) = \eta \otimes \nu$. Overall, our method (COT-FM) typically outperforms these baselines across the diverse and challenging set of tasks we consider. We find that PCP-Map \citep{wang2023efficient} is a strong baseline, but we emphasize that this model relies on the use of an input-convex neural network \citep{amos2017input} and it is hence unclear how to adapt this method to e.g. images.  Appendix \ref{appendix:experiment_details} contains further details and results for all of our experiments.\footnote{Code for all of our experiments is available at \href{https://github.com/GavinKerrigan/cot_fm}{\texttt{https://github.com/GavinKerrigan/cot\_fm}}.}

\paragraph{2D Synthetic Data.} 

We first consider synthetic distributions where $Y = U = \R$. Our source measure is taken to be the independent product $\eta(y, u) = \pi^Y_{\#} \nu \otimes \mathcal{N}(0, 1)$. We plot ground-truth joint distributions and samples for two datasets in Figure \ref{fig:2d_plot}. See Appendix \ref{appendix:experiment_details} for additional results. Samples from our method (COT-FM) closely match those from the ground-truth distribution, whereas samples from PCP-Map and COT-Flow \citep{wang2023efficient} can produce samples in regions of zero support under the ground-truth distribution. 
In Table \ref{tab:2d_table}, we provide a quantitative analysis, where we measure the $W_2$ and MMD distances between the generated and ground-truth joint distributions. This is motivated by Proposition \ref{prop:triangular_maps_couple_conditionals}, as triangular maps which couple the joint distributions necessarily couple the conditional distributions. Our method outperforms the 
 baselines across all metrics. 
 
\paragraph{Lotka-Volterra (LV) Dynamical System.}

Here we estimate parameters of the LV model given only noisy observations of its solution. The LV model has parameters $u = (\alpha, \beta, \gamma, \delta) \in \R^4_{\geq 0}$ and a pair of coupled nonlinear ODEs of the form 
\begin{equation}\label{eqn:lotka-volterra}
    \frac{\d p_1(t)}{\d t} = \alpha p_1 - \beta p_1 p_2 \quad \frac{\d p_2(t)}{\d t}= -\gamma p_2 + \delta p_1 p_2
\end{equation}

whose solution $p(t) = (p_1(t), p_2(t)) \in \R^2_{\geq 0}$ represents the number of prey and predator species at time $t \in [0, T]$. Following \citet{alfonso2023generative}, we assume $p(0) = (30, 1)$ and that $\log(u) \sim \mathcal{N}(m, 0.5 I)$ with $m = (-0.125, -3, -0.125, -3)$. Given parameters $u \in \R^4_{\geq 0}$, we simulate Equation \eqref{eqn:lotka-volterra} for $t \in \{0, 2, \dots, 20 \}$ to obtain a solution $z(u) \in \R^{22}_{\geq 0}$. An observation $y \in \R^{22}_{\geq 0}$ is obtained by the addition of log-normal noise, i.e. $\log(y) \sim \mathcal{N}(\log(z(u), 0.1 I)$. We thus may simulate many $(y, u)$ pairs from the target measure for training.

\begin{wrapfigure}[27]{r}{0.33\textwidth}
    \centering
    \vspace{-0.5cm}
    \includegraphics[width=0.33\textwidth]{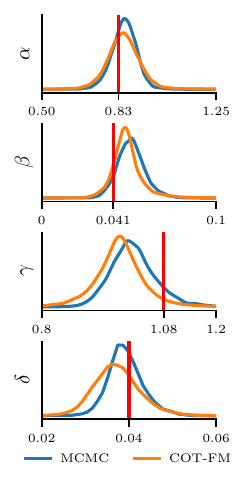}
    \caption{Sample KDEs on the Lotka-Volterra inverse problem. The red lines denote the true  parameter values.}
    \label{fig:lv_figure}
    \vspace{-1.1cm}
\end{wrapfigure}

As a benchmark, we follow the settings of \citet{alfonso2023generative} and choose parameters $u = (0.83, 0.041, 1.08, 0.04)$ to generate a single observation $y$ as described above. In Figure \ref{fig:lv_figure}, we plot a histogram of $10,000$ samples from the posterior $\nu(u \mid y)$ of COT-FM. Since the ground-truth posterior is intractable, we compare against differential evolution Metropolis MCMC \citep{braak2006markov}. Samples from our method qualitatively resemble those from MCMC, and the posterior mode is typically close to the true unknown $u$ (shown in red). Our method is quantitatively closest to the MCMC samples in the $W_2$ metric, and competitive in the MMD metric (Table \ref{tab:lv_table}).

\paragraph{Darcy Flow Inverse Problem.}

Here we consider an infinite-dimensional Bayesian inverse problem 
from the 2D Darcy flow PDE. The setting 
is adapted from \citet{hosseini2023conditional}. We opt to compare against WaMGAN \citep{hosseini2023conditional}, as this is currently the only other extant amortized function-space COT method, and FFM \citep{kerrigan2022diffusion} as a function-space flow matching ablation.

\begin{figure}[!t]
    \centering
    \includegraphics[width=\textwidth]{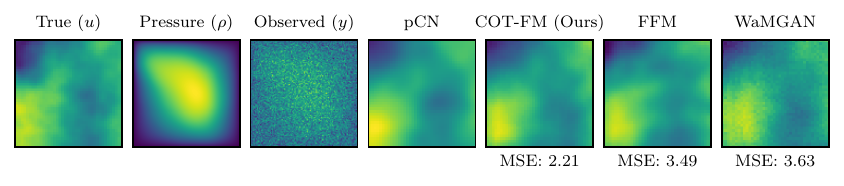}
    \caption{Darcy flow illustration. A true permeability $u$ is shown, as well as the pressure field $\rho$ and its observed, noisy version $y$. We compare an ensemble average of posterior samples from the various methods against MCMC (pCN) \citep{10.1214/13-STS421}. COT-FM achieves the lowest MSE to pCN.}

    \label{fig:darcy_posterior}
\end{figure}

The Darcy flow PDE is an elliptic equation on a smooth domain $\Omega \subseteq \R^d$ which relates a permeability field $\exp(u)$, a pressure field $\rho$, and a source term $f$ via $-\text{div} \exp(u) \nabla \rho = f$ on $\Omega$ subject to $\rho = 0$ on $\partial \Omega$. Our goal is to recover the permeability $u$ from noisy measurements $y$ of the pressure $\rho$. Both the unknown $u$ and observations $y$ are functions and thus infinite-dimensional. To define our target measure, we specify a prior $\nu(u) = \NN(0, C)$ with a Mat\'ern kernel $C$ of lengthscale $\ell = 1/2$ and $\nu=3/2$. Given $u \sim \eta(u)$, the Darcy flow PDE is solved numerically \citep{alnaes2015fenics} to obtain a solution $\rho(u)$ observed at some finite but arbitrary number of points $\{x_1, \dots, x_n \} \subset \R^2$. An observation $y(u)$ is obtained by adding Gaussian noise to each observation, i.e. $y(u) \sim \mathcal{N}(\rho(u), \sigma^2 I)$ where $\sigma=2.5\times10^{-2}$. We implement all models via a Fourier Neural Operator \citep{li2020fourier}, allowing us to work with arbitrary discretizations, as required by the functional nature of this problem.

\begin{wraptable}[9]{l}{0.45\textwidth}
\centering
\vspace{-0.5cm}
\caption{Predictive performance of the generated samples on the Darcy flow inverse problem. Average result $\pm$ one standard deviation obtained on 5 test sets of 5,000 samples each.}
\label{tab:darcy_table}
\resizebox{0.45\textwidth}{!}{%
\begin{tabular}{l | lll }
\toprule
       & MSE (1e-2) & CRPS (1e-2) \\
\midrule
WaMGAN  & $6.55 {\scriptstyle\pm 0.07}$  & $18.75 {\scriptstyle\pm 0.10}$ \\
FFM &  $7.30 {\scriptstyle\pm 0.07}$ & $\mathbf{15.47}{\scriptstyle\pm 0.06}$  \\
COT-FFM (Ours)  &  $\mathbf{5.40} {\scriptstyle\pm 0.08}$ & $15.56 {\scriptstyle\pm 0.08} $ \\
\bottomrule
\end{tabular}
}
\end{wraptable}

We provide an illustration in Figure \ref{fig:darcy_posterior}. As the true posterior is intractable, we compare against preconditioned Crank-Nicolson (pCN) \citep{10.1214/13-STS421}, a function-space MCMC method. In Figure \ref{fig:darcy_posterior}, we plot the mean posteriors obtained from the various methods. Qualitatively, both COT-FM and FFM are good approximations to pCN, while WaMGAN has visual artifacts. However, the MSE between our method and the pCN mean is lower than that of FFM. Table \ref{tab:darcy_table} provides a quantitative comparison between the methods on a test set of 5,000 samples, where we measure MSE and CRPS \citep{hersbach2000decomposition}. We compare the ensemble mean of 10 samples against the true $u$ value as running pCN for each observation is prohibitively expensive. COT-FM outperforms FFM and WaMGAN in terms of MSE and is on-par with FFM in terms of CRPS. See Appendix \ref{appendix:experiment_details} for further results.

\section{Conclusion}
\label{section:conclusion}

We analyze conditional optimal transport from a geometric and dynamical point of view. Our analysis culminates in the characterization of absolutely continuous curves of measures in a conditional Wasserstein space, resulting in a conditional analog of the Benamou-Brenier Theorem. We use these result to build on the framework of triangular transport and flow matching to develop simulation-free methods for conditional generative models. Our methods are applicable across a wide class of problems, and we demonstrate our methodology on several challenging inverse problems.

\paragraph{Limitations and Broader Impacts.} A limitation COT-FM is that computing the full COT plan can be expensive for large datasets, necessitating the use of minibatch approximations potentially resulting in sub-optimal plans. Moreover, computing the COT plan incurs a small additional computational cost compared to standard flow matching. As with all generative models, a potential negative impact is the potential for disinformation through generated samples being purported as real. 

\begin{ack}
  This research was supported by the Hasso Plattner Institute (HPI) Research Center in Machine Learning and Data Science at the University of California, Irvine, by the
    National Science Foundation under award 1900644, and
    by the National Institutes of Health under award R01-LM013344.
\end{ack}

{
\small
\bibliographystyle{plainnat}
\bibliography{refs}

\begin{thebibliography}{61}
\providecommand{\natexlab}[1]{#1}
\providecommand{\url}[1]{\texttt{#1}}
\expandafter\ifx\csname urlstyle\endcsname\relax
  \providecommand{\doi}[1]{doi: #1}\else
  \providecommand{\doi}{doi: \begingroup \urlstyle{rm}\Url}\fi

\bibitem[Abril-Pla et~al.(2023)Abril-Pla, Andreani, Carroll, Dong, Fonnesbeck, Kochurov, Kumar, Lao, Luhmann, Martin, et~al.]{abril2023pymc}
Oriol Abril-Pla, Virgile Andreani, Colin Carroll, Larry Dong, Christopher~J Fonnesbeck, Maxim Kochurov, Ravin Kumar, Junpeng Lao, Christian~C Luhmann, Osvaldo~A Martin, et~al.
\newblock Py{MC}: A modern, and comprehensive probabilistic programming framework in python.
\newblock \emph{PeerJ Computer Science}, 9:\penalty0 e1516, 2023.

\bibitem[Adler and {\"O}ktem(2018)]{adler2018deep}
Jonas Adler and Ozan {\"O}ktem.
\newblock Deep {B}ayesian inversion.
\newblock \emph{arXiv preprint arXiv:1811.05910}, 2018.

\bibitem[Albergo et~al.(2023{\natexlab{a}})Albergo, Boffi, and Vanden-Eijnden]{albergo2023stochastic2}
Michael~S Albergo, Nicholas~M Boffi, and Eric Vanden-Eijnden.
\newblock Stochastic interpolants: A unifying framework for flows and diffusions.
\newblock \emph{arXiv preprint arXiv:2303.08797}, 2023{\natexlab{a}}.

\bibitem[Albergo et~al.(2023{\natexlab{b}})Albergo, Goldstein, Boffi, Ranganath, and Vanden-Eijnden]{albergo2023stochastic}
Michael~S Albergo, Mark Goldstein, Nicholas~M Boffi, Rajesh Ranganath, and Eric Vanden-Eijnden.
\newblock Stochastic interpolants with data-dependent couplings.
\newblock \emph{arXiv preprint arXiv:2310.03725}, 2023{\natexlab{b}}.

\bibitem[Alfonso et~al.(2023)Alfonso, Baptista, Bhakta, Gal, Hou, Lyubimova, Pocklington, Sajonz, Trigila, and Tsai]{alfonso2023generative}
Jason Alfonso, Ricardo Baptista, Anupam Bhakta, Noam Gal, Alfin Hou, Isa Lyubimova, Daniel Pocklington, Josef Sajonz, Giulio Trigila, and Ryan Tsai.
\newblock A generative flow for conditional sampling via optimal transport.
\newblock \emph{arXiv preprint arXiv:2307.04102}, 2023.

\bibitem[Aln{\ae}s et~al.(2015)Aln{\ae}s, Blechta, Hake, Johansson, Kehlet, Logg, Richardson, Ring, Rognes, and Wells]{alnaes2015fenics}
Martin Aln{\ae}s, Jan Blechta, Johan Hake, August Johansson, Benjamin Kehlet, Anders Logg, Chris Richardson, Johannes Ring, Marie~E Rognes, and Garth~N Wells.
\newblock The {FEniCS} project version 1.5.
\newblock \emph{Archive of numerical software}, 3\penalty0 (100), 2015.

\bibitem[Ambrosio et~al.(2005)Ambrosio, Gigli, and Savar{\'e}]{ambrosio2005gradient}
Luigi Ambrosio, Nicola Gigli, and Giuseppe Savar{\'e}.
\newblock \emph{Gradient Flows: In Metric Spaces and in the Space of Probability Measures}.
\newblock Springer Science \& Business Media, 2005.

\bibitem[Ambrosio et~al.(2013)Ambrosio, Bressan, Helbing, Klar, Zuazua, Ambrosio, and Gigli]{ambrosio2013user}
Luigi Ambrosio, Alberto Bressan, Dirk Helbing, Axel Klar, Enrique Zuazua, Luigi Ambrosio, and Nicola Gigli.
\newblock A user’s guide to optimal transport.
\newblock \emph{Modelling and Optimisation of Flows on Networks: Cetraro, Italy 2009, Editors: Benedetto Piccoli, Michel Rascle}, pages 1--155, 2013.

\bibitem[Amos et~al.(2017)Amos, Xu, and Kolter]{amos2017input}
Brandon Amos, Lei Xu, and J~Zico Kolter.
\newblock Input convex neural networks.
\newblock In \emph{International Conference on Machine Learning}, pages 146--155. PMLR, 2017.

\bibitem[Amos et~al.(2023)]{amos2023tutorial}
Brandon Amos et~al.
\newblock Tutorial on amortized optimization.
\newblock \emph{Foundations and Trends in Machine Learning}, 16\penalty0 (5):\penalty0 592--732, 2023.

\bibitem[Arora et~al.(2018)Arora, Risteski, and Zhang]{arora2018gans}
Sanjeev Arora, Andrej Risteski, and Yi~Zhang.
\newblock Do {GAN}s learn the distribution? {S}ome theory and empirics.
\newblock In \emph{International Conference on Learning Representations}, 2018.

\bibitem[Baptista et~al.(2020)Baptista, Hosseini, Kovachki, and Marzouk]{baptista2020conditional}
Ricardo Baptista, Bamdad Hosseini, Nikola~B Kovachki, and Youssef Marzouk.
\newblock Conditional sampling with monotone {GAN}s: from generative models to likelihood-free inference.
\newblock \emph{arXiv preprint arXiv:2006.06755}, 2020.

\bibitem[Baptista et~al.(2023)Baptista, Marzouk, and Zahm]{baptista2023representation}
Ricardo Baptista, Youssef Marzouk, and Olivier Zahm.
\newblock On the representation and learning of monotone triangular transport maps.
\newblock \emph{Foundations of Computational Mathematics}, pages 1--46, 2023.

\bibitem[Barboni et~al.(2024)Barboni, Peyr{\'e}, and Vialard]{barboni2024understanding}
Rapha{\"e}l Barboni, Gabriel Peyr{\'e}, and Fran{\c{c}}ois-Xavier Vialard.
\newblock Understanding the training of infinitely deep and wide {R}es{N}ets with conditional optimal transport.
\newblock \emph{arXiv preprint arXiv:2403.12887}, 2024.

\bibitem[Beaumont(2010)]{beaumont2010approximate}
Mark~A Beaumont.
\newblock Approximate {B}ayesian computation in evolution and ecology.
\newblock \emph{Annual review of ecology, evolution, and systematics}, 41:\penalty0 379--406, 2010.

\bibitem[Benamou and Brenier(2000)]{benamou2000computational}
Jean-David Benamou and Yann Brenier.
\newblock A computational fluid mechanics solution to the {M}onge-{K}antorovich mass transfer problem.
\newblock \emph{Numerische Mathematik}, 84\penalty0 (3):\penalty0 375--393, 2000.

\bibitem[Blei et~al.(2017)Blei, Kucukelbir, and McAuliffe]{blei2017variational}
David~M Blei, Alp Kucukelbir, and Jon~D McAuliffe.
\newblock Variational inference: A review for statisticians.
\newblock \emph{Journal of the American statistical Association}, 112\penalty0 (518):\penalty0 859--877, 2017.

\bibitem[Bogachev and Ruas(2007)]{bogachev2007measure}
Vladimir~Igorevich Bogachev and Maria Aparecida~Soares Ruas.
\newblock \emph{Measure Theory}, volume~2.
\newblock Springer, 2007.

\bibitem[Braak(2006)]{braak2006markov}
Cajo JF~Ter Braak.
\newblock A {M}arkov chain {M}onte {C}arlo version of the genetic algorithm differential evolution: easy bayesian computing for real parameter spaces.
\newblock \emph{Statistics and Computing}, 16:\penalty0 239--249, 2006.

\bibitem[Bunne et~al.(2022)Bunne, Krause, and Cuturi]{bunne2022supervised}
Charlotte Bunne, Andreas Krause, and Marco Cuturi.
\newblock Supervised training of conditional monge maps.
\newblock \emph{Advances in Neural Information Processing Systems}, 35:\penalty0 6859--6872, 2022.

\bibitem[Carlier et~al.(2016)Carlier, Chernozhukov, and Galichon]{carlier2016vector}
Guillaume Carlier, Victor Chernozhukov, and Alfred Galichon.
\newblock {Vector quantile regression: An optimal transport approach}.
\newblock \emph{The Annals of Statistics}, 44\penalty0 (3):\penalty0 1165 -- 1192, 2016.
\newblock \doi{10.1214/15-AOS1401}.
\newblock URL \url{https://doi.org/10.1214/15-AOS1401}.

\bibitem[Chemseddine et~al.(2023)Chemseddine, Hagemann, and Wald]{chemseddine2023diagonal}
Jannis Chemseddine, Paul Hagemann, and Christian Wald.
\newblock Y-{D}iagonal couplings: Approximating posteriors with conditional {W}asserstein distances.
\newblock \emph{arXiv preprint arXiv:2310.13433}, 2023.

\bibitem[Chemseddine et~al.(2024)Chemseddine, Hagemann, Wald, and Steidl]{chemseddine2024conditional}
Jannis Chemseddine, Paul Hagemann, Christian Wald, and Gabriele Steidl.
\newblock Conditional {W}asserstein distances with applications in {B}ayesian {OT} flow matching.
\newblock \emph{arXiv preprint arXiv:2403.18705}, 2024.

\bibitem[Cotter et~al.(2013)Cotter, Roberts, Stuart, and White]{10.1214/13-STS421}
S.~L. Cotter, G.~O. Roberts, A.~M. Stuart, and D.~White.
\newblock {MCMC} methods for functions: Modifying old algorithms to make them faster.
\newblock \emph{Statistical Science}, 28\penalty0 (3):\penalty0 424 -- 446, 2013.

\bibitem[Cranmer et~al.(2020)Cranmer, Brehmer, and Louppe]{cranmer2020frontier}
Kyle Cranmer, Johann Brehmer, and Gilles Louppe.
\newblock The frontier of simulation-based inference.
\newblock \emph{Proceedings of the National Academy of Sciences}, 117\penalty0 (48):\penalty0 30055--30062, 2020.

\bibitem[Da~Prato and Zabczyk(2014)]{da2014stochastic}
Giuseppe Da~Prato and Jerzy Zabczyk.
\newblock \emph{Stochastic Equations in Infinite Dimensions}.
\newblock Cambridge University Press, 2014.

\bibitem[Dashti and Stuart(2013)]{dashti2013bayesian}
Masoumeh Dashti and Andrew~M Stuart.
\newblock The {B}ayesian approach to inverse problems.
\newblock \emph{arXiv preprint arXiv:1302.6989}, 2013.

\bibitem[Davtyan et~al.(2023)Davtyan, Sameni, and Favaro]{davtyan2023efficient}
Aram Davtyan, Sepehr Sameni, and Paolo Favaro.
\newblock Efficient video prediction via sparsely conditioned flow matching.
\newblock In \emph{Proceedings of the IEEE/CVF International Conference on Computer Vision}, pages 23263--23274, 2023.

\bibitem[Flamary et~al.(2021)Flamary, Courty, Gramfort, Alaya, Boisbunon, Chambon, Chapel, Corenflos, Fatras, Fournier, Gautheron, Gayraud, Janati, Rakotomamonjy, Redko, Rolet, Schutz, Seguy, Sutherland, Tavenard, Tong, and Vayer]{flamary2021pot}
R{\'e}mi Flamary, Nicolas Courty, Alexandre Gramfort, Mokhtar~Z. Alaya, Aur{\'e}lie Boisbunon, Stanislas Chambon, Laetitia Chapel, Adrien Corenflos, Kilian Fatras, Nemo Fournier, L{\'e}o Gautheron, Nathalie~T.H. Gayraud, Hicham Janati, Alain Rakotomamonjy, Ievgen Redko, Antoine Rolet, Antony Schutz, Vivien Seguy, Danica~J. Sutherland, Romain Tavenard, Alexander Tong, and Titouan Vayer.
\newblock {POT}: {P}ython optimal transport.
\newblock \emph{Journal of Machine Learning Research}, 22\penalty0 (78):\penalty0 1--8, 2021.
\newblock URL \url{http://jmlr.org/papers/v22/20-451.html}.

\bibitem[Franzese et~al.(2024)Franzese, Corallo, Rossi, Heinonen, Filippone, and Michiardi]{franzese2024continuous}
Giulio Franzese, Giulio Corallo, Simone Rossi, Markus Heinonen, Maurizio Filippone, and Pietro Michiardi.
\newblock Continuous-time functional diffusion processes.
\newblock \emph{Advances in Neural Information Processing Systems}, 36, 2024.

\bibitem[Gebhard et~al.(2023)Gebhard, Wildberger, Dax, Angerhausen, Quanz, and Sch{\"o}lkopf]{gebhard2023inferring}
Timothy~D Gebhard, Jonas Wildberger, Maximilian Dax, Daniel Angerhausen, Sascha~P Quanz, and Bernhard Sch{\"o}lkopf.
\newblock Inferring atmospheric properties of exoplanets with flow matching and neural importance sampling.
\newblock \emph{arXiv preprint arXiv:2312.08295}, 2023.

\bibitem[Gigli(2008)]{gigli2008geometry}
Nicola Gigli.
\newblock \emph{On the geometry of the space of probability measures in {R}n endowed with the quadratic optimal transport distance}.
\newblock Ph{D} thesis, Scuola Normale Superiore, 2008.

\bibitem[Hersbach(2000)]{hersbach2000decomposition}
Hans Hersbach.
\newblock Decomposition of the continuous ranked probability score for ensemble prediction systems.
\newblock \emph{Weather and Forecasting}, 15\penalty0 (5):\penalty0 559--570, 2000.

\bibitem[Hosseini et~al.(2023)Hosseini, Hsu, and Taghvaei]{hosseini2023conditional}
Bamdad Hosseini, Alexander~W Hsu, and Amirhossein Taghvaei.
\newblock Conditional optimal transport on function spaces.
\newblock \emph{arXiv preprint arXiv:2311.05672}, 2023.

\bibitem[Isobe et~al.(2024)Isobe, Koyama, Hayashi, and Fukumizu]{isobe2024extended}
Noboru Isobe, Masanori Koyama, Kohei Hayashi, and Kenji Fukumizu.
\newblock Extended flow matching: a method of conditional generation with generalized continuity equation.
\newblock \emph{arXiv preprint arXiv:2402.18839}, 2024.

\bibitem[Kerrigan et~al.(2023)Kerrigan, Ley, and Smyth]{kerrigan2022diffusion}
Gavin Kerrigan, Justin Ley, and Padhraic Smyth.
\newblock Diffusion generative models in infinite dimensions.
\newblock In \emph{Proceedings of The 26th International Conference on Artificial Intelligence and Statistics}, volume 206 of \emph{Proceedings of Machine Learning Research}, pages 9538--9563. PMLR, 2023.

\bibitem[Kerrigan et~al.(2024)Kerrigan, Migliorini, and Smyth]{kerrigan2023functional}
Gavin Kerrigan, Giosue Migliorini, and Padhraic Smyth.
\newblock Functional flow matching.
\newblock In \emph{Proceedings of The 27th International Conference on Artificial Intelligence and Statistics}, volume 238 of \emph{Proceedings of Machine Learning Research}, pages 3934--3942, 2024.

\bibitem[Kim et~al.(2022)Kim, Lee, and Paik]{kim2022conditional}
Young-geun Kim, Kyungbok Lee, and Myunghee~Cho Paik.
\newblock Conditional wasserstein generator.
\newblock \emph{IEEE Transactions on Pattern Analysis and Machine Intelligence}, 2022.

\bibitem[Kim et~al.(2023)Kim, Lee, Choi, Won, and Paik]{kim2023wasserstein}
Young-geun Kim, Kyungbok Lee, Youngwon Choi, Joong-Ho Won, and Myunghee~Cho Paik.
\newblock Wasserstein geodesic generator for conditional distributions.
\newblock \emph{arXiv preprint arXiv:2308.10145}, 2023.

\bibitem[Klambauer et~al.(2017)Klambauer, Unterthiner, Mayr, and Hochreiter]{klambauer2017self}
G{\"u}nter Klambauer, Thomas Unterthiner, Andreas Mayr, and Sepp Hochreiter.
\newblock Self-normalizing neural networks.
\newblock \emph{Advances in neural information processing systems}, 30, 2017.

\bibitem[Kovachki et~al.(2021)Kovachki, Li, Liu, Azizzadenesheli, Bhattacharya, Stuart, and Anandkumar]{kovachki2021neural}
Nikola~B. Kovachki, Zongyi Li, Burigede Liu, Kamyar Azizzadenesheli, Kaushik Bhattacharya, Andrew~M. Stuart, and Anima Anandkumar.
\newblock Neural operator: Learning maps between function spaces.
\newblock \emph{CoRR}, abs/2108.08481, 2021.

\bibitem[Li et~al.(2020)Li, Kovachki, Azizzadenesheli, Liu, Bhattacharya, Stuart, and Anandkumar]{li2020fourier}
Zongyi Li, Nikola Kovachki, Kamyar Azizzadenesheli, Burigede Liu, Kaushik Bhattacharya, Andrew Stuart, and Anima Anandkumar.
\newblock Fourier neural operator for parametric partial differential equations.
\newblock \emph{arXiv preprint arXiv:2010.08895}, 2020.

\bibitem[Lim et~al.(2023)Lim, Kovachki, Baptista, Beckham, Azizzadenesheli, Kossaifi, Voleti, Song, Kreis, Kautz, et~al.]{lim2023score}
Jae~Hyun Lim, Nikola~B Kovachki, Ricardo Baptista, Christopher Beckham, Kamyar Azizzadenesheli, Jean Kossaifi, Vikram Voleti, Jiaming Song, Karsten Kreis, Jan Kautz, et~al.
\newblock Score-based diffusion models in function space.
\newblock \emph{arXiv preprint arXiv:2302.07400}, 2023.

\bibitem[Lipman et~al.(2022)Lipman, Chen, Ben-Hamu, Nickel, and Le]{lipman2022flow}
Yaron Lipman, Ricky~TQ Chen, Heli Ben-Hamu, Maximilian Nickel, and Matthew Le.
\newblock Flow matching for generative modeling.
\newblock In \emph{The Eleventh International Conference on Learning Representations}, 2022.

\bibitem[McCann(1997)]{mccann1997convexity}
Robert~J McCann.
\newblock A convexity principle for interacting gases.
\newblock \emph{Advances in Mathematics}, 128\penalty0 (1):\penalty0 153--179, 1997.

\bibitem[Mirza and Osindero(2014)]{mirza2014conditional}
Mehdi Mirza and Simon Osindero.
\newblock Conditional generative adversarial nets.
\newblock \emph{arXiv preprint arXiv:1411.1784}, 2014.

\bibitem[Papamakarios et~al.(2019)Papamakarios, Sterratt, and Murray]{papamakarios2019sequential}
George Papamakarios, David Sterratt, and Iain Murray.
\newblock Sequential neural likelihood: Fast likelihood-free inference with autoregressive flows.
\newblock In \emph{The 22nd international conference on artificial intelligence and statistics}, pages 837--848. PMLR, 2019.

\bibitem[Papamakarios et~al.(2021)Papamakarios, Nalisnick, Rezende, Mohamed, and Lakshminarayanan]{papamakarios2021normalizing}
George Papamakarios, Eric Nalisnick, Danilo~Jimenez Rezende, Shakir Mohamed, and Balaji Lakshminarayanan.
\newblock Normalizing flows for probabilistic modeling and inference.
\newblock \emph{Journal of Machine Learning Research}, 22\penalty0 (57):\penalty0 1--64, 2021.

\bibitem[Pedregosa et~al.(2011)Pedregosa, Varoquaux, Gramfort, Michel, Thirion, Grisel, Blondel, Prettenhofer, Weiss, Dubourg, Vanderplas, Passos, Cournapeau, Brucher, Perrot, and Duchesnay]{scikit-learn}
F.~Pedregosa, G.~Varoquaux, A.~Gramfort, V.~Michel, B.~Thirion, O.~Grisel, M.~Blondel, P.~Prettenhofer, R.~Weiss, V.~Dubourg, J.~Vanderplas, A.~Passos, D.~Cournapeau, M.~Brucher, M.~Perrot, and E.~Duchesnay.
\newblock Scikit-learn: Machine learning in {P}ython.
\newblock \emph{Journal of Machine Learning Research}, 12:\penalty0 2825--2830, 2011.

\bibitem[Pooladian et~al.(2023)Pooladian, Ben-Hamu, Domingo-Enrich, Amos, Lipman, and Chen]{pooladian2023multisample}
Aram-Alexandre Pooladian, Heli Ben-Hamu, Carles Domingo-Enrich, Brandon Amos, Yaron Lipman, and Ricky Chen.
\newblock Multisample flow matching: Straightening flows with minibatch couplings.
\newblock \emph{arXiv preprint arXiv:2304.14772}, 2023.

\bibitem[Ramesh et~al.(2022)Ramesh, Lueckmann, Boelts, Tejero-Cantero, Greenberg, Gon{\c{c}}alves, and Macke]{ramesh2022gatsbi}
Poornima Ramesh, Jan-Matthis Lueckmann, Jan Boelts, {\'A}lvaro Tejero-Cantero, David~S Greenberg, Pedro~J Gon{\c{c}}alves, and Jakob~H Macke.
\newblock Gatsbi: Generative adversarial training for simulation-based inference.
\newblock \emph{arXiv preprint arXiv:2203.06481}, 2022.

\bibitem[Ray et~al.(2022)Ray, Ramaswamy, Patel, and Oberai]{ray2022efficacy}
Deep Ray, Harisankar Ramaswamy, Dhruv~V Patel, and Assad~A Oberai.
\newblock The efficacy and generalizability of conditional gans for posterior inference in physics-based inverse problems.
\newblock \emph{arXiv preprint arXiv:2202.07773}, 2022.

\bibitem[Sajjadi et~al.(2017)Sajjadi, Scholkopf, and Hirsch]{sajjadi2017enhancenet}
Mehdi~SM Sajjadi, Bernhard Scholkopf, and Michael Hirsch.
\newblock Enhancenet: Single image super-resolution through automated texture synthesis.
\newblock In \emph{Proceedings of the IEEE international conference on computer vision}, pages 4491--4500, 2017.

\bibitem[Santambrogio(2015)]{santambrogio2015optimal}
Filippo Santambrogio.
\newblock Optimal transport for applied mathematicians.
\newblock \emph{Birk{\"a}user, NY}, 55\penalty0 (58-63):\penalty0 94, 2015.

\bibitem[Sharrock et~al.(2022)Sharrock, Simons, Liu, and Beaumont]{sharrock2022sequential}
Louis Sharrock, Jack Simons, Song Liu, and Mark Beaumont.
\newblock Sequential neural score estimation: Likelihood-free inference with conditional score based diffusion models.
\newblock \emph{arXiv preprint arXiv:2210.04872}, 2022.

\bibitem[Spantini et~al.(2022)Spantini, Baptista, and Marzouk]{spantini2022coupling}
Alessio Spantini, Ricardo Baptista, and Youssef Marzouk.
\newblock Coupling techniques for nonlinear ensemble filtering.
\newblock \emph{SIAM Review}, 64\penalty0 (4):\penalty0 921--953, 2022.

\bibitem[Tong et~al.(2023)Tong, Malkin, Huguet, Zhang, Rector-Brooks, Fatras, Wolf, and Bengio]{tong2023improving}
Alexander Tong, Nikolay Malkin, Guillaume Huguet, Yanlei Zhang, Jarrid Rector-Brooks, Kilian Fatras, Guy Wolf, and Yoshua Bengio.
\newblock Improving and generalizing flow-based generative models with minibatch optimal transport.
\newblock In \emph{ICML Workshop on New Frontiers in Learning, Control, and Dynamical Systems}, 2023.

\bibitem[Trigila and Tabak(2016)]{trigila2016data}
Giulio Trigila and Esteban~G Tabak.
\newblock Data-driven optimal transport.
\newblock \emph{Communications on Pure and Applied Mathematics}, 69\penalty0 (4):\penalty0 613--648, 2016.

\bibitem[Villani et~al.(2009)]{villani2009optimal}
C{\'e}dric Villani et~al.
\newblock \emph{Optimal Transport: Old and New}, volume 338.
\newblock Springer, 2009.

\bibitem[Wang et~al.(2023)Wang, Baptista, Marzouk, Ruthotto, and Verma]{wang2023efficient}
Zheyu~Oliver Wang, Ricardo Baptista, Youssef Marzouk, Lars Ruthotto, and Deepanshu Verma.
\newblock Efficient neural network approaches for conditional optimal transport with applications in {B}ayesian inference.
\newblock \emph{arXiv preprint arXiv:2310.16975}, 2023.

\bibitem[Wildberger et~al.(2024)Wildberger, Dax, Buchholz, Green, Macke, and Sch{\"o}lkopf]{wildberger2024flow}
Jonas Wildberger, Maximilian Dax, Simon Buchholz, Stephen Green, Jakob~H Macke, and Bernhard Sch{\"o}lkopf.
\newblock Flow matching for scalable simulation-based inference.
\newblock \emph{Advances in Neural Information Processing Systems}, 36, 2024.

\end{thebibliography}
}

\newpage
\appendix

\section*{Appendix: Table of Contents}

\startcontents[sections]
\printcontents[sections]{l}{1}{\setcounter{tocdepth}{2}}

\section{Unconditional Optimal Transport}
\label{appendix:unconditional_ot}

We provide here a brief and informal overview of optimal transport in the standard unconditional setting. For more details, we refer to the standard references of \citet{villani2009optimal, santambrogio2015optimal, ambrosio2005gradient}. Let $X$ be a separable metric space and fix a cost function $c: X \times X \to \R \cup \{ +\infty \}$. Suppose we have two Borel measures $\eta, \nu \in \P(X)$. The \textit{Monge problem} seeks to find a measurable transport map $T: X \to X$ minimizing the expected cost of transport, i.e. corresponding to the optimization problem

\begin{equation}
    \inf_T \left\{ \int_X c(x, T(x)) \d \eta(x) \mid T_{\#}\eta = \nu \right\}.
\end{equation}

This optimization problem is challenging, though, as it involves a nonlinear constraint and the set of feasible maps may be empty. In contrast, the \textit{Kantorovich problem} is a relaxation which seeks to find an optimal coupling $\gamma \in \Pi(\eta, \nu)$, i.e. a probability distribution over $X \times X$ with marginals $\eta, \nu$, which solves
\begin{equation}
    \inf_{\gamma} \left\{ \int_{X\times X} c(x_0, x_1) \d \gamma(x_0, x_1) \mid \gamma \in \Pi(\eta, \nu) \right\}.
\end{equation}

Under fairly weak conditions (e.g., the cost is lower semicontinuous and bounded from below \citep[Theorem~2.5]{ambrosio2013user}), minimizers to the Kantorovich problem are guaranteed to exist. If the cost function is $c(x_0, x_1) = |x_0 - x_1|^p$ for some $1 < p < \infty$, under sufficient regularity conditions on $\eta$ a solution $T^\star$ to the Monge problem is guaranteed to exist and, moreover, the coupling $\gamma^\star = (I, T^\star)_{\#}\eta$ is optimal for the Kantorovich problem. See \citet[Chapter~2]{ambrosio2013user} and \citet[Theorem~6.2.10]{ambrosio2005gradient}.

\paragraph{Wasserstein Space.} In the special case that $c(x_0, x_1) = |x_0 - x_1|^p$ for $1 \leq p < \infty$, and $\eta, \nu \in \P_p(X)$, the Kantorovich problem admits a finite-cost solution. The cost of such an optimal coupling is the \textit{$p$-Wasserstein distance}

\begin{equation}
    W_p^p(\eta, \nu) = \min_{\gamma} \left\{ \int_{X \times X} |x_0 - x_1|^p \d \gamma(x_0, x_1) \mid \gamma \in \Pi(\eta, \nu) \right\}
\end{equation}

which, as the name suggests, is a metric on the space $\P_p(X)$ \citep[Section~7.1]{ambrosio2005gradient} \citep[Section~5.1]{santambrogio2015optimal}. The Wasserstein distance admits a \textit{dynamical} formulation via the Benamou-Brenier theorem \citep{benamou2000computational}. Namely, the $p$-Wasserstein distance can be obtained by finding a time-dependent vector field transforming $\eta$ to $\nu$ across time $t \in [0, 1]$ with minimal energy:
\begin{equation}
    W_p(\eta, \nu) = \min_{(\gamma_t, v_t)}\left\{ \int_0^1 \int_X |v_t(x)|^p \d \gamma_t(x) \d t\mid \gamma_0 = \eta, \gamma_1 = \nu, \partial_t \gamma_t + \text{div}(v_t \gamma_t) = 0 \right\}.
\end{equation}

Here, we constrain our minimization problem over the set of measures and vector fields $(\gamma_t, v_t)$ interpolating between $\eta$ and $\nu$, satisfying a continuity equation (see Section \ref{section:conditional_benamou_brenier}).
In Section \ref{section:conditional_wasserstein_space}, we study a generalization of the Wasserstein distances for conditional optimal transport problems. In particular, Theorem \ref{theorem:conditional_benamou_brenier} provides a generalization of the Benamou-Brenier theorem to the conditional setting which recovers a conditional Wasserstein distance.

\section{Conditional Optimal Transport}
\label{appendix:cot}

This section contains additional discussion regarding the static COT problem, supplementing Section \ref{subsection:static_cot}. We refer to \citet{hosseini2023conditional}, \citet{baptista2020conditional}, and \citet{chemseddine2023diagonal} for further results and details. 

Given a source and target measures $\eta, \nu \in \P_p^\mu(Y \times U)$, the \textit{conditional Monge problem} seeks to find a triangular mapping solving
\begin{equation} \label{eqn:conditional_monge_problem_app}
    \inf_{T} \left\{ \int_{Y \times U} c(y, u, T(y, u)) \d \eta(y, u) \mid T_{\#} \eta = \nu, T: (y, u) \mapsto (y, T_U(y, u))\right\}.
\end{equation}

The conditional Monge problem also admits a relaxation under which one only considers couplings whose $Y$-components are almost surely equal. To that end, we consider the subset $\mathscr{C} \subset (Y \times U)^2$ whose $Y$ components are identical, i.e.,

\begin{equation} \label{eqn:restricted_support}
    \mathscr{C} := \left\{ (y_0, u_0, y_1, u_1) \in (Y \times U)^2 \mid y_0 = y_1 \right\}
\end{equation}

and we define the set of \textit{$(Y)$-restricted probability measures} $\RR_Y \subset \P \left( (Y \times U)^2 \right)$ such that every $\gamma \in \RR_Y$ is concentrated on $\mathscr{C}$. In other words, if $\gamma \in \RR_Y$, then samples $(y_0, u_0, y_1, u_1) \sim \gamma$ have $y_0 = y_1$ almost surely. In addition, for any $\eta, \nu \in \P(Y \times U)$, we define the set of \textit{triangular couplings} $\Pi_Y(\eta, \nu)$ to be the probability measures in $\RR_Y$ whose marginals are $\eta$ and $\nu$, i.e.

\begin{equation}
    \Pi_Y(\eta, \nu) = \left\{ \gamma \in \RR_Y \mid \pi^{1,2}_{\#} \gamma = \eta, \pi^{3,4}_{\#} \gamma = \nu \right\}.
\end{equation}

The \textit{conditional Kantorovich problem} seeks a triangular coupling $\gamma^\star$ solving
\begin{equation} \label{eqn:conditional_kantorovich_problem_appendix}
    \inf_{\gamma} \left\{ \int_{(Y \times U)^2} c(y_0, u_0, y_1, u_1) \d \gamma(y_0, u_0, y_1, u_1) \mid \gamma \in \Pi_Y(\eta, \nu) \right\}.
\end{equation}

\citet{hosseini2023conditional} prove the existence of minimizers to the conditional Kantorovich and Monge problems under very general assumptions. Moreover, optimal couplings to the conditional Kantorovich problem induce optimal couplings for $\mu$-almost every conditional measure. Assuming sufficient regularity assumptions on the conditional measures, unique solutions to the conditional Monge problem exist. We restate these results here for the sake of completeness.

\begin{prop}[Prop~3.3 \citep{hosseini2023conditional}] \label{prop:existence_conditional_kantorovich}
    Fix $\eta, \nu \in \P^\mu(Y \times U)$. Suppose the cost function $c$ is continuous, $\inf c > -\infty$, and there exists a finite cost coupling $\gamma \in \Pi_Y(\eta, \nu)$. Then, the conditional Kantorovich problem admits a minimizer $\gamma^\star$. Moreover, $\gamma^{\star,y_0}(y_1, u_0, u_1) = \hat{\gamma}^{\star, y_0}(u_0, u_1) \delta(y_1 - y_0)$ where for $\mu$-almost every $y$ the measure $\gamma^{\star, y}$ is an optimal coupling for $\eta^y, \nu^y$ under the cost $c^y(u_0, u_1) = c(y, u_0, y, u_1)$
\end{prop}


\begin{prop}[Prop~3.8 \citep{hosseini2023conditional}] \label{prop:existence_conditional_monge}
    Fix $1 < p < \infty$ and $\eta, \nu \in \P_p^\mu(Y \times U)$. Suppose $c(y_0, u_0, y_1, u_1) = |u_0 - u_1|^p$. If $\eta^y$ assign zero measure to Gaussian null sets for $\mu$-almost every $y$, then there is a unique solution $T^\star$ to the conditional Monge problem, and $\gamma^\star = (I, T^\star)_{\#}\eta$ is the unique solution to the conditional Kantorovich problem.  If $\nu^y$ also assign zero measure to Gaussian null sets for $\mu$-almost every $y$, then $T^\star$ is injective $\eta$-almost everywhere.
\end{prop}

\section{Closed-Form Conditional Wasserstein Distance for Gaussian Measures}
\label{appendix:closed_form_gaussians}

In this section, we provide additional details and results regarding the closed-form conditional Wasserstein distance for Gaussian distributions. See Section \ref{section:conditional_wasserstein_space} in the main paper.

Suppose $Y$ and $U$ are Euclidean spaces (of possibly different dimensions), and that $\eta, \nu \in \P_p^\mu(Y \times U)$ are Gaussians of the form

\begin{equation}
    \eta = \NN\left( \begin{bmatrix} m \\ m_u^\eta \end{bmatrix}, \begin{bmatrix} \Sigma & \Sigma^\eta_{12} \\ \Sigma^\eta_{21} & \Sigma^\eta_{22} \end{bmatrix}\right) \qquad \nu = \NN\left( \begin{bmatrix} m \\ m_u^\nu \end{bmatrix}, \begin{bmatrix} \Sigma & \Sigma^\nu_{12} \\ \Sigma^\nu_{21} & \Sigma^\nu_{22} \end{bmatrix}\right).
\end{equation}

This form is chosen to ensure that $\eta$ and $\nu$ have equal $Y$-marginals. It follows that $\mu = \pi^Y_{\#} \eta = \pi^Y_{\#}\nu = \NN(m, \Sigma)$. Let
\begin{equation}
    Q^\eta = \Sigma_{22}^\eta - \Sigma_{21}^\eta \Sigma^{-1} \Sigma_{12}^\eta
    \qquad Q^\nu = \Sigma_{22}^\nu - \Sigma_{21}^\nu \Sigma^{-1} \Sigma_{12}^\nu
    \qquad R = (\Sigma_{21}^\eta - \Sigma_{21}^\nu)\Sigma^{-1}.
\end{equation}

We have that the conditionals $\eta^y, \nu^y$ are available in closed-form:
\begin{equation}
    \eta^y = \mathcal{N}\left(m_u^\eta + \Sigma_{21}^\eta \Sigma^{-1}(y - m), Q^\eta \right) \qquad \nu^y = \mathcal{N}\left(m_u^\nu + \Sigma_{21}^\nu \Sigma^{-1}(y - m), Q^\nu \right).
\end{equation}

Thus, for any fixed $y$, we use the known closed-form unconditional Wasserstein distance to obtain
\begin{equation}
    \begin{split}
    W_2^2(\eta^y, \nu^y) = &\big|m_u^\eta - m_u^\nu + R(y-m)\big|^2 + \text{Tr}\left(Q^\eta + Q^\nu - 2\left( (Q^\eta)^{1/2} Q^\nu (Q^\eta)^{1/2} \right)^{1/2} \right).
    \end{split}
\end{equation}

We now take an expectation over $y \sim \mu = \mathcal{N}(m, \Sigma)$ to compute $W_2^{\mu, 2}$. Observe that $R(y - m) \sim \mathcal{N}(0, R \Sigma R^{\sf T})$ and that $\E_{y \sim \mu}[|R(y-m)|^2] = \text{Tr}(R \Sigma R^{\sf T})$. Thus,
\begin{align} \label{eqn:closed_form_gaussian_distance}
    W_2^{\mu,2}(\eta, \nu) &= \E_{y \sim \mu}\left[W_2^{2}(\eta^y, \nu^y)\right] \\
    &= \E_{y \sim \mu} \left[ |m_u^\eta - m_u^\nu|^2 + 2 \langle m_u^\eta - m_u^\nu , R(y - m) \rangle + |R(y-m)|^2 \right] \\
    &\qquad + \text{Tr}\left(Q^\eta + Q^\nu - 2\left( (Q^\eta)^{1/2} Q^\nu (Q^\eta)^{1/2} \right)^{1/2} \right) \nonumber \\ 
    &= |m_u^\eta - m_u^\nu|^2 + \text{Tr}\left(Q^\eta + Q^\nu - 2\left( (Q^\eta)^{1/2} Q^\nu (Q^\eta)^{1/2} \right)^{1/2} + R \Sigma R^T \right).
\end{align}

This form, perhaps unsurprisingly, closely resembles the unconditional Wasserstein distance between two Gaussians, except for the presence of an additional $\text{Tr}(R \Sigma R^{\sf T})$ term. Note that when $\eta, \nu$ have uncorrelated $Y, U$ components, we precisely recover $W_2^{2}(\pi^U_{\#} \eta, \pi^U_{\#} \nu)$ as one may expect. As a special case of interest, if $Y = U = \R$ and
\begin{equation}
    \eta = \NN(0, I) \qquad \nu = \NN\left(0, \begin{bmatrix} 1 & \rho \\ \rho & 1 \end{bmatrix}\right) \quad |\rho| < 1
\end{equation}

then we obtain as a special case of Equation \eqref{eqn:closed_form_gaussian_distance} that $W_2^{\mu, 2}(\eta, \nu) = 2(1 - \sqrt{1 - \rho^2})$. This is zero if and only if $\rho = 0$, i.e. $\eta = \nu$.

\section{Proofs: Section 4}

In this section, we provide detailed proofs of our claims in Section \ref{section:conditional_wasserstein_space}, regarding the metric properties of the conditional Wasserstein space.

\subsection{Metric Properties}
We first note that $W_p^\mu(\eta, \nu)$ may be viewed as the minimal value of the constrained Kantorovich problem in Equation \eqref{eqn:conditional_kantorovich_problem} when one takes the cost to be the metric on the space $Y \times U$. Similar results, relating the conditional Wasserstein distance to triangular couplings, have appeared previously, but our proof is independent of these prior works \citep{chemseddine2023diagonal, gigli2008geometry}. 

\begin{prop}[Equivalent Formulation of the Conditional Wasserstein Distance]
    \label{prop:conditional_kantorovich_gives_distance}
    Fix $\eta, \nu \in \P_p^\mu(Y \times U)$ and $1 \leq p < \infty$. Then, $W_p^\mu(\eta, \nu)$ is well-defined, finite, and 
    \begin{equation} \label{eqn:conditional_kantorovich_equals_distance}
        W_p^{\mu,p}(\eta, \nu) = \min_{\gamma} \left\{ \int_{(Y \times U)^2} d^p(y_0, u_0, y_1, u_1) \d \gamma \mid \gamma \in \Pi_Y(\eta, \nu) \right\}
    \end{equation}

    where $W_p^{\mu, p}(\eta, \nu)$ represents the $p$-th power of the conditional $p$-Wasserstein distance.
\end{prop}

\begin{proof}
    The cost function $d^p$ is clearly continuous and non-negative, and hence by Proposition \ref{prop:existence_conditional_kantorovich} it suffices to exhibit a finite-cost coupling $\gamma \in \Pi_Y(\eta, \nu)$ between $\eta$ and $\nu$. Indeed, take the conditionally independent coupling
    \begin{equation}
        \gamma(y_0, u_0, y_1, u_1) = \eta(u_0 \mid y_1) \nu(u_1 \mid y_1) \delta(y_1 - y_0) \mu(y_1)
    \end{equation}
    which is clearly in $\Pi_Y(\eta, \nu)$. We then have that
    \begin{align*}
        &\int_{(Y \times U)^2} d^p(y_0, u_0, y_1, u_1) \d \gamma(y_0, u_0, y_1, u_1) =  \int_{(Y \times U)^2} \norm{(y_0, u_0) - (y_1, u_1)}_{Y \times U}^p \d \gamma (y_0, u_0, y_1, u_1) \\
        &\leq 2^p \int_{(Y \times U)^2} \left( \norm{(y_0, u_0)}_{Y \times U}^p + \norm{(y_1, u_1)}_{Y \times U}^p \right) \d \gamma (y_0, u_0, y_1, u_1) \\
        &= 2^p \left( \int_{Y \times U} \norm{(y_0, u_0)}_{Y \times U}^p \d \eta(y_0, u_0) + \int_{Y \times U} \norm{(y_1, u_1)}_{Y \times U}^p \d \nu(y_1, u_1)  \right) < +\infty.
    \end{align*}

    Hence, Equation \eqref{eqn:conditional_kantorovich_equals_distance} admits a minimizer $\gamma^\star \in \Pi_Y(\eta, \nu)$. By Proposition \ref{prop:existence_conditional_kantorovich}, this minimizer may be taken to have the form $\gamma^\star = \gamma^{\star, {y_1}}(u_0, u_1) \delta(y_1 - y_0) \mu(y_1)$ where $\gamma^{\star, {y_1}}(u_0, u_1)$ is $\mu(y_1)$-almost surely an optimal coupling between $\eta^{y_1}, \nu^{{y_1}}$ for the cost $|u_1 - u_0|^p$. Thus,
    \begin{align}
        \int_{(Y \times U)^2} d^p \d \gamma^\star &= \int_{Y} \int_{U^2} |u_1 - u_0|^p \d \gamma^{\star,y}(u_0, u_1) \d \mu(y) \\
        &= \int_Y W_p^p(\eta^y, \nu^y) \d \mu(y) = W_p^{p, \mu}(\eta, \nu). 
    \end{align}

    Here, we emphasize that the $\mu$-almost sure uniqueness 
    of the disintegrations of $\eta, \nu$ along $Y$ result in a well-defined expression.

     Moreover, if $\eta \in \P_p^\mu(Y \times U)$ it follows that $\eta^y \in \P_p(U)$ for $\mu$-a.e. $y$, because
    \begin{align}
        \int_Y \int_U |u|^p \d \eta^y(u) \d \mu(y) &\leq \int_Y \int_U |(y, u)|^p \d \eta^y(u) \d \mu(y) \\
        &= \int_{Y \times U} |(y, u)|^p \d \eta(y, u) < +\infty.
    \end{align}

    Thus all considered $p$-Wasserstein distances on $U$ are finite.
\end{proof}

We now proceed to prove several metric properties of our distance.

\propertiesofdistance*

\begin{proof}

    \textbf{Part (a).} This is simply a restatement of Proposition \ref{prop:conditional_kantorovich_gives_distance}.

    \textbf{Part (b).}
    Fix $\eta, \nu, \rho \in \P_p^\mu(Y \times U)$. Since $W_p$ is a metric on $\P_p(U)$, we immediately obtain the symmetry of $W_p^\mu$. Moreover, we have that $W_p^\mu(\eta, \nu) = 0$ if and only if $\eta^y = \nu^y$ for $\mu$-almost every $y$. Thus, if $W_p^\mu(\eta, \nu) = 0$ and $E \subseteq Y \times U$ is Borel measurable,
    \begin{equation}
        \eta(E) = \int_{Y} \eta^y(E^y) \d \mu(y) = \int_Y \nu^y(E^y) \d \mu(y) = \nu(E).
    \end{equation}
    which shows that $\eta = \nu$. Here, $E^y = \{ u \mid (y, u) \in E \}$ is the $y$-slice of $E$. 
    Conversely, if $\eta = \nu$, then $\eta^y = \nu^y$ up to a $\mu$-null set by the essential uniqueness of disintegrations. Thus, $W_p^\mu(\eta, \nu) = 0$ if and only if $\eta = \nu$.

    By Minkowski's inequality and the triangle inequality for $W_p$ on $\P_p(U)$, we see
    \begin{align}
        W_p^\mu(\eta, \nu) &\leq \left( \E_{y \sim \mu} \left[ ( W_p(\eta^y, \rho^y) + W_p(\rho^y, \nu^y) )^p \right] \right)^{1/p} \\
        &\leq \E_{y \sim \mu}[W_p^p(\eta^y, \rho^y)]^{1/p} + \E_{y \sim \mu}[W_p^p(\rho^y, \nu^y)]^{1/p} \\
        &= W_p^\mu(\eta, \rho) + W_p^\mu(\rho, \nu).
    \end{align}

    \textbf{Part (c).}
    We provide a counterexample. Fix any $u_0 \neq 0 \in U$ and $y_0, y_1 \in Y$ such that $y_0 \neq y_1$. Define $\mu = \frac{1}{2}\left(\delta_{y_0} + \delta_{y_1} \right)$. Set $u_k = (k + 1)u_0$ for $k = 1, 2, \dots$ and for each $k$, define two measures on $Y \times U$ by
    \begin{equation}
        \eta_k = \frac{1}{2}\left( \delta_{y_0 u_0} + \delta_{y_1 u_k} \right) \qquad \nu_k = \frac{1}{2}\left( \delta_{y_1 u_0} + \delta_{u_k y_0} \right).
    \end{equation}

    It is clear that 
    \begin{equation}
        W_p^{\mu, p}(\eta_k, \nu_k) = k^p |u_0|^p \qquad W_p^p(\eta_k, \nu_k) = \min \{ k^p |u_0|^p, |y_1 - y_0|^p \}.
    \end{equation}
    
    Moreover, as $k \to \infty$ we have $W_p^{\mu}(\mu_k, \nu_k) \to \infty$ but $W_p^p(\nu_k, \eta_k)$ remains bounded. See Figure \ref{fig:counterexample}. 

    \textbf{Part (d).}
        First, the unconditional distance $W_p(\eta, \nu)$ may be obtained via an unrestricted coupling in $\Pi(\eta, \nu)$, i.e. the set of all joint measures on $Y \times U$ having marginals $\eta, \nu$. Since $\Pi(\eta, \nu) \supseteq \Pi_Y(\eta, \nu)$, by part (a) we see that $W_p(\eta, \nu) \leq W_p^{\mu}(\eta, \nu)$. 
    
        Let $\gamma^\star(y_0, u_0, y_1, u_1) = \gamma^{\star,y_1}(u_0, u_1) \delta(y_1 - y_0) \mu(y_1)$ be an optimal $\gamma^\star \in \Pi_Y(\eta, \nu)$. We claim that $\gamma(u_0, u_1) := \int_Y \gamma^{\star,y}(u_0, u_1) \d \mu(y)$ couples $\pi^U_{\#}\eta$ and $\pi^U_{\#}\nu$. Let $\pi^0: (u_0, u_1) \mapsto u_0$ be the projection onto the first coordinate of $U \times U$. Observe that for $\mu$-almost every $y$, we have that $\gamma^{\star, y} \in \Pi(\eta^y, \nu^y)$ is optimal, and, in particular, $\pi^0_{\#} \gamma^{\star, y} = \eta^y.$ Fix an arbitrary $\varphi \in C_b(U)$. We then have
    \begin{align}
        &\int_U \varphi(u_0) \d \pi^0_{\#} \gamma(u_0) = \int_{U^2} (\varphi \circ \pi^0) \d \gamma(u_0, u_1) \\
        &= \int_Y \int_{U^2} (\varphi \circ \pi^0) \d \gamma^{\star,y}(u_0, u_1) \d \mu(y) = \int_Y \int_U \varphi(u_0) \d \pi^0_{\#} \gamma^{\star,y}(u_0) \d \mu(y) \\
        &= \int_Y \int_U \varphi(u_0) \d \eta^y(u_0) \d \mu(y) {=} \int_{Y \times U} \varphi(u_0) \d \eta(u_0, y) \\ 
        &= \int_{Y \times U} (\varphi \circ \pi^U) \d \eta(u_0, y) = \int_U \varphi \d \pi^U_{\#} \eta (u_0).
    \end{align}
    
    Thus $\pi^U_{\#}\gamma = \pi^U_{\#}\eta$. A similar argument shows that for the map $\pi^1: (u_0, u_1) \mapsto u_1$ we have $\pi^1_{\#}\gamma = \pi^U_{\#}\nu$, so that $\gamma \in \Pi(\pi^U_{\#}\eta, \pi^U_{\#}\nu)$.
    
     Now, as $\gamma^{\star,y_1}(u_0, u_1) \in \Pi(\eta^{y_1}, \nu^{y_1})$ is $\mu$-almost surely optimal in the usual Wasserstein sense,
    \begin{align}
        W_p^{p, \mu}(\eta, \nu) &= \int_Y \int_{U^2} |u_0 - u_1|^p \d \gamma^{\star,y}(u_0, u_1) \d \mu(y) \\
        &= \int_{U^2} |u_0 - u_1|^p \d \gamma(u_0, u_1) \\
        &\geq W_p^p(\pi^U_{\#}\eta, \pi^U_{\#}\nu)
    \end{align}
    
    since $\gamma \in \Pi(\pi^U_{\#}\eta, \pi^U_{\#}\nu)$ is a coupling but potentially sub-optimal.
    
\end{proof}

\begin{figure}
\centering
\begin{tikzpicture}[>=stealth]

\draw[->] (0,-0.2) -- (0,3) node[left] {$Y$};
\draw[->] (-0.2,0) -- (3,0) node[below] {$U$};

\coordinate (A) at (1,2);
\coordinate (B) at (2,1);
\coordinate (C) at (1,1);
\coordinate (D) at (2,2);

\coordinate (E) at (1, 0);
\coordinate (F) at (2, 0);
\coordinate (G) at (0, 1);
\coordinate (H) at (0, 2);

\node[left] at (G) {$y_0$};
\node[left] at (H) {$y_1$};
\node[below] at (E) {$u_0$};
\node[below] at (F) {$u_k$};

\draw[fill=black] (C) circle (2pt);
\draw[fill=black] (D) circle (2pt);
\draw (A) circle (2pt);
\draw (B) circle (2pt);

\end{tikzpicture}
\caption{The counterexample in Proposition \ref{prop:properties_of_distance}. The measure $\eta_k$ is shown in black and the measure $\nu_k$ is shown in white.}
\label{fig:counterexample}
\end{figure}
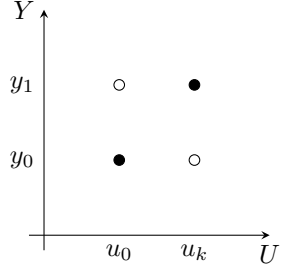

\subsection{Geodesics}

We now study the geodesics in the space $\P_p^\mu(Y \times U)$. 

\theoremgeodesicspace*
\begin{proof}
    Write $\lambda_t: (Y \times U)^2 \to Y \times U$ for the linear interpolant 
    \begin{equation}
        \lambda_t(y_0, u_0, y_1, u_1) = (ty_0 + (1-t)y_1, tu_0 + (1-t)u_1) \qquad 0 \leq t \leq 1.
    \end{equation} 
    
    Let $\gamma^\star \in \Pi_{Y}(\eta, \nu)$ be an optimal restricted coupling, and consider the path of measures in $\P_p(Y \times U)$ given by
    \begin{equation}
        \gamma_t = [\lambda_t]_{\#} \gamma^\star \qquad 0 \leq t \leq 1.
    \end{equation}
  
    \textbf{Step one:} We check that for each $0 \leq t \leq 1$, we have $\gamma_t \in \P_p^\mu(Y \times U)$. That is, we need to check that for all Borel $A \subseteq Y$, we have $\gamma_t(A \times U) = \mu(A)$. Indeed, recall that restricted measures are concentrated on the set $\mathscr{C}$ (see Equation \eqref{eqn:restricted_support}). Thus,
    \begin{align*}
        \gamma_t(A \times U) &= \gamma^\star \left\{ \lambda_t^{-1}(A \times U) \right\} \\
        &= \gamma^\star \left\{ (y, u_0, y, u_1) \mid y \in A \right\}  \\
        &= \pi^{1}_{\#} \gamma^\star(A) = (\pi^{1} \circ \pi^{1,2})_{\#} \gamma^\star(A) \\
        &= \pi^{1}_{\#} \eta(A) = \mu(A)
    \end{align*}

    i.e. $\gamma_t(A \times Y) = \mu(A)$ as claimed.

    \textbf{Step two:} We show that $W_p^\mu(\gamma_t, \gamma_s) = |t - s|W_p^\mu(\eta, \nu)$. Set $\gamma_t^s := (\lambda_t, \lambda_s)_{\#} \gamma^\star$ for $0 \leq s < t \leq 1$. We claim $\gamma_t^s \in \Pi_Y(\gamma_t, \gamma_s)$. Indeed, we have $\pi^{1,2}_{\#} \gamma_t^s = \gamma_t$ because for all Borel $A \subseteq Y \times U$,
    \begin{equation}
        (\lambda_t, \lambda_s)_{\#} \gamma^\star \left(A \times Y \times U\right) = \gamma^\star\left(\lambda_t^{-1}(A)\right) = (\lambda_t)_{\#}\gamma^*(A).
    \end{equation}

    An analogous calculation shows that $\pi_{\#}^{3,4} \gamma_t^s = \gamma_s$, so that $\gamma_t^s \in \Pi(\gamma_t, \gamma_s)$. We now check that $\gamma_t^s \in \RR_Y(Y \times U)$. Indeed, suppose $E \subseteq Y \times U$ is a Borel set such that $E \cap \mathscr{C} = \varnothing$. In other words, for every $(y_0, u_0, y_1, u_1) \in E$ we have $y_0 \neq y_1$. Set $D := (\lambda_t, \lambda_s)^{-1}(E)$. We claim $D \cap \mathscr{C} = \varnothing$, so that
    \begin{align}
        \gamma_t^s(E) = (\lambda_t, \lambda_s)_{\#} \gamma^\star(E) &= \gamma^\star( (\lambda_t, \lambda_s)^{-1}(E) ) \\
        &= \gamma^\star(D \cap \mathscr{C}) = 0.
    \end{align}

    Indeed, if $c = (y, u_0, y, u_1) \in \mathscr{C}$, then
    \begin{equation}
        (\lambda_t, \lambda_s)(c) = (y, tu_0 + (1-t)u_1, y, su_0 + (1-s)u_1) \notin E
    \end{equation}
    \begin{equation}
        \implies c \notin (\pi_t, \pi_s)^{-1}(E).
    \end{equation}

    Thus $\gamma_t^s \in \Pi_Y(\eta, \nu)$ as claimed. Now, we have
    \begin{align*}
        W_p^{\mu,p}(\gamma_t, \gamma_s) &\leq \int_{(Y \times U)^2} d^p\left(y_0, u_0, y_1, u_1\right) \d \lambda_t^s(y_0, u_0, y_1, u_1) \\
        &= \int_{(Y \times U)^2} d^p\left( \lambda_t(y_0, u_0, y_1, u_1), \lambda_s(y_0, u_0, y_1, u_1) \right) \d \gamma^\star(y_0, u_0, y_1, u_1) \\
        &= \int_{(Y \times U)^2} \left( |(t-s)(y_0- y_1)|^2 + |(t-s)(u_0 - u_1)|^2 \right)^{p/2} \d \gamma^\star(y_0, u_0, y_1, u_1) \\
        &= |t-s|^p \int_{(Y \times U)^2} d^p(y_0, u_0, y_1, u_1) \d \gamma^\star (y_0, u_0, y_1, u_1) \\
        &= |t-s|^p W_p^{\mu,p}(\eta, \nu).
    \end{align*}

    Conversely, an application of the previous inequality and the triangle inequality show that for $0 \leq s \leq t \leq 1$,
    \begin{align}
        W_p^\mu(\eta, \nu) &\leq W_p^\mu(\eta, \gamma_s) + W_p^\mu(\gamma_s, \gamma_t) + W_p^\mu(\gamma_t, \nu) \\
        &\leq s W_p^\mu(\eta, \nu) + W_p^\mu(\gamma_s, \gamma_t) + (1 - t) W_p^\mu(\eta, \nu).
    \end{align}

    Rearranging the previous inequality implies $|t-s|W_p^\mu(\eta, \nu) \leq W_p^\mu(\gamma_s, \gamma_t)$ for all $s, t \in [0, 1]$, and hence ${W_p^{\mu}(\gamma_t, \gamma_s) = |t - s|^p W_p^{\mu}(\eta, \nu)}$.
\end{proof}

\theoremmccann*
\begin{proof}
    Consider the function $w_t: Y \times U \to U$ given by
    \begin{equation}
        w_t(y, u) = \left(0, T^\star_U(y, u) - u\right) = (0, w_{t,U}(y, u))
    \end{equation}
    and note this is precisely $w_t(y, u) = \partial_t T^\star_t(y, u)$. Define the vector field
    \begin{equation}
        v_t(y, u) = \left( w_t \circ T_t^{\star,-1} \right)(y, u) = \left(0, (w_{t,\UU} \circ T_{t,\UU}^{\star, -1})(y, u)\right).
    \end{equation}

    For any $\varphi \in \text{Cyl}(Y \times U)$, we have
    \begin{align}
        \frac{\d}{\d t} \int_{Y \times U} \varphi(y, u) \d \gamma_t(y, u) &= \frac{\d }{\d t} \int_{Y \times U} \varphi(y, u) \d [T_t]_{\#} \eta(y, u) \\
        &= \frac{\d }{\d t} \int_{Y \times U} \varphi(y, T_{t,U}^{\star}(y, u)) \d \eta(y, u) \\
        &= \int_{Y \times U} \langle \nabla \varphi(y, T_{t,U}^{\star}(y, u), w_t(y, u)) \rangle \d \eta(y, u) \\
        &= \int_{Y \times U} \langle \nabla \varphi(y, u), v_t(y, u) \rangle \d \gamma_t(y, u)
    \end{align}

    which shows that $(\gamma_t, v_t)$ solve the continuity equation. 
    
    Now, note that for $0 \leq a \leq b \leq 1$, we have
    \begin{align}
        \int_a^b \norm{v_t}_{L^p(\gamma_t, Y\times U)} \d t &= \int_a^b \left( \int_{Y \times U} \left|w_t \circ T_t^{\star, -1}\right|^p(y, u) \d \gamma_t(y, u) \right)^{1/p} \d t \\
        &= \int_a^b \left( \int_{Y \times U} |w_t|^p(y, u) \d \eta(y, u) \right)^{1/p} \d t \\
        &= \int_a^b \left( \int_{Y \times U} |u - T_U^\star(y, u)|^p(y, u) \d \eta(y, u) \right)^{1/p} \d t \\
        &= (b - a) W_p^\mu(\eta, \nu).
    \end{align}

    In particular, $\int_0^1 \norm{v_t}_{L^p(\gamma_t, Y \times U)} \d t < \infty$ and so by Theorem \ref{theorem:vf_to_ac} $(\gamma_t)$ is absolutely continuous. A similar calculation shows that $(b-a)W_p^\mu(\eta, \nu) = W_p^\mu(\gamma_b, \gamma_a) = \int_a^b |\gamma^\prime(t)|$, where the last line follows from the absolute continuity of $\gamma_t$. Thus, $\norm{v_t}_{L^p(\gamma_t, Y \times U)} = |\gamma^\prime|(t)$ for almost every $t \in [0, 1]$ by Lebesgue differentiation.

\end{proof}

\section{Proofs: Section 5}
\label{appendix:proofs2}

In this section, we provide proofs of all claims made in Section \ref{section:conditional_benamou_brenier}.

\subsection{Continuity Equation}
We begin with a lemma that is used in the proof of Theorem \ref{theorem:vf_to_ac}. Informally, a solution to the continuity equation with a triangular vector field will result in the conditional measures almost surely satisfying the continuity equation as well.

\lemmatriangularcontinuity*
\begin{proof}
    Fix any $\varphi \in \text{Cyl}(U \times I)$. Suppose $\psi \in \text{Cyl}(Y)$ is given, and note that $\psi(y) \varphi(u, t) \in \text{Cyl}(Y \times U \times I)$. As $(v_t, \gamma_t)$ solve the continuity equation, it follows from the triangular structure of $v_t$ that upon testing against $\psi \varphi$ we have
    \begin{equation}
        \int_I \int_Y \psi(y) \int_U \left( \partial_t \varphi(u, t) + \langle v_t^U(y, u), \nabla_u \varphi(u, t) \right) \d \gamma_t^y(u) \d \mu(y) \d t = 0.
    \end{equation}

    Because $\psi(y) \in \text{Cyl}(Y)$, it is of the form $\rho(\pi(y))$ where $\pi: Y \to \R^k$ for some $k \geq 1$ and $\rho \in C_c^\infty(\R^k)$. Taking $\rho$ to be a sequence of smooth approximations to the indicator function of an arbitrary rectangle $E = E_1 \times E_2 \times \dots \times E_k \subseteq \R^k$, we see

    \begin{equation}
        \int_{\pi^{-1}(E)} \int_I \int_U \left( \partial_t \varphi(u, t) + \langle v_t^U(y, u), \nabla_u \varphi(u, t) \right) \d \gamma_t^y(u) \d t \d \mu(y) = 0.
    \end{equation}

    As $Y$ is separable, the Borel $\sigma$-algebra on $Y$ is generated by the cylinder sets, i.e. those which are precisely of the form $\pi^{-1}(E)$ for some finite-dimensional rectangle $E$. We have thus shown that for an arbitrary Borel measurable set $E \subseteq Y$,
    \begin{equation}
        \int_E \int_I \int_U \left( \partial_t \varphi(u, t) + \langle v_t^U(y, u), \nabla_u \varphi(u, t) \right) \d \gamma_t^y(u) \d t \d \mu(y) = 0.
    \end{equation}

    From this, it follows that
    \begin{equation}
        \int_I \int_U \left( \partial_t \varphi(u, t) + \langle v_t^U(y, u), \nabla_u \varphi(u, t) \right) \d \gamma_t^y(u) \d \mu(y) \d t = 0 \qquad \mu\text{-almost every } y.
    \end{equation}
\end{proof}

\subsection{Absolutely Continuous Curves}

We now proceed to prove the main results of this section. First, we introduce some preliminary notions. We define the map $j_q: L^q(\gamma, Y \times U) \to L^p(\gamma, Y \times U)$ for $1/p + 1/q = 1$ via
\begin{equation}
    j_q(w) = \begin{cases}
            |w|^{q-2}w  & w \neq 0 \\
            0 & w = 0
            \end{cases}
\end{equation}
which is the Fr\'echet differential of the convex functional $\frac{1}{q}\norm{w}_{L^q(\gamma, Y \times U)}^q$. A straightforward calculation shows that this map satisfies
\begin{equation}
    \norm{j_q(w)}^p_{L^p(\gamma, Y \times U)} = \norm{w}_{L^q(\gamma, Y \times U)}^q = \int_{Y \times U} \langle j_q(w), w \rangle \d \gamma(y, u).
\end{equation}

See also \citet[Chapter~8]{ambrosio2005gradient}.

\theoremactovf*
\begin{proof}
    Assume without loss of generality that $|\gamma^\prime|(t) \in L^\infty(I)$ and that $I = (0, 1)$ \citep[Lemma 1.1.4, Lemma 8.1.3]{ambrosio2005gradient}. Fix any $\varphi \in \text{Cyl}(Y \times U)$. For $s, t \in I$ there exists an optimal triangular coupling $\gamma_{st} \in \Pi_Y(\gamma_s, \gamma_t)$. By H\"older's inequality,
    \begin{equation}
        |\gamma_t(\varphi) - \gamma_s(\varphi)| \leq \text{Lip}(\varphi) W_p^\mu(\gamma_s, \gamma_t). 
    \end{equation}

    It follows that $t \mapsto \gamma_t(\varphi)$ is absolutely continuous. We can introduce the upper semicontinuous and bounded map
    \begin{equation}
        H(y_0, u_0, y_1, u_1) = \begin{cases}
                                |\nabla \varphi(y_0, u_0) | & (y_0, u_0) = (y_1, u_1) \\
                                \frac{|\varphi(y_0, u_0) - \varphi(y_1, u_1)|}{|(y_0, u_0) - (y_1, u_1)|} & (y_0, u_0) \neq (y_1, u_1)
                                \end{cases}.
    \end{equation}

    For $|h|$ sufficiently small, choose any optimal coupling $\gamma_{(s+h)h} \in \Pi_Y(\gamma_{s+h}, \gamma_s)$ and note that
    \begin{align}
        \frac{|\gamma_{s+h}(\varphi) - \gamma_s(\varphi)|}{|h|} &\leq \frac{1}{|h|} \int_{(Y \times U)^2} |(y_0, u_0) - (y_1, u_1)| H(y_0, u_0, y_1, u_1) \d \gamma_{(s+h)s} \\
        &\leq \frac{W_p^\mu(\gamma_{s+h}, \gamma_s)}{|h|} \left( \int_{(Y \times U)^2} H^q(y_0, u_0, y_1, u_1) \d \gamma_{(s+h)h, s} \right)^{1/q}.
    \end{align}
    
    If $t$ is a point of metric differentiability for $t \mapsto \gamma_t$, note that $\gamma_{(t+h)t} \to (I, I)_{\#} \gamma_t$ narrowly, where $I$ is the identity map on $Y \times U$. Moreover, since $\gamma_t \in \P_p^\mu(Y \times U)$, it follows that on the diagonal we have that almost surely $H(y_0, u_0, y_0, u_1) = \iota(|\nabla_u \varphi(y_0, u_0)|$. Thus, 
    \begin{align} \label{eqn:limsup_bound}
        \limsup_{h \to 0}\frac{|\gamma_{t+h}(\varphi) - \gamma_t(\varphi)| }{|h|} &\leq |\gamma^\prime|(t) \left( \int_{Y \times U} |H|^q(y_0, u_0, y_0, u_0) \d \gamma_t(y_0, u_0) \right)^{1/q} \\
        &= |\gamma^\prime|(t) \norm{ \iota(\nabla_u \varphi)}_{L^q(\gamma_t, Y \times U)} = |\gamma^\prime|(t) \norm{ \nabla_u \varphi}_{L^q(\gamma_t, U)}.
    \end{align}

    Taking $Q = Y \times U \times I$ and $\gamma = \int \gamma_t \d t$, fix any $\varphi \in \text{Cyl}(Q)$. We have that
    \begin{equation}
    \begin{aligned}
        &\int_Q \partial_s \varphi(y, u, s) \d \gamma(y, u, s) \\
        &= \lim_{h \downarrow 0} \int_I \frac{1}{h} \left(\int_{Y \times U} \varphi(y, u, s) \d \gamma_s(y, u) - \int_{(Y \times U)} \varphi(y, u, s) \d \gamma_{s+h}(y, u) \right) \d s.
    \end{aligned}
    \end{equation}

    An application of Fatou's Lemma, Equation \eqref{eqn:limsup_bound}, and H\"older's inequality gives us
    \begin{equation} \label{eqn:boundedness_of_functional}
        \left|\int_Q \partial_s \varphi(y, u, s) \d \gamma(y, u, s) \right| \leq \left( \int_J |\gamma^\prime|(s) \d s \right)^{1/p} \left( \int_Q |\nabla_u \varphi(y, u, s)|^q \d \mu(y, u, s) \right)^{1/q}
    \end{equation}

    for any interval $J \subset I$ with $\text{supp }\varphi \subset J \times Y \times U$. 

    Fix the subspace
    \begin{equation}
        V = \left\{ \iota(\nabla_u \varphi(y, u, s)) : \varphi \in \text{Cyl}(Q) \right\} \subseteq Y \times U
    \end{equation}
    and denote by $\overline{V}$ its $L^q(\gamma, Y \times U \times I)$ closure. Define the linear functional $L: V \to \R$ via
    \begin{equation}
        L(\nabla_u \varphi) = -\int_Q \partial_s \varphi(y, u, s) \d \gamma(y, u, s)
    \end{equation}

    and note that Equation \eqref{eqn:boundedness_of_functional} implies that $L$ is a bounded linear functional on $V$. Thus (by Hahn-Banach and the fact that $V \subseteq \overline{V}$ is dense) we may uniquely extend $L$ to $\overline{V}$. We thus have a convex minimization problem
    \begin{equation}
        \min_{w \in \overline{V}} \frac{1}{q} \int_Q |w(y, u, s)|^q \d \gamma(y, u, s) - L(w)
    \end{equation}

    which admits the unique solution $w$ such that $j_q(w) - L = 0$. In particular, the estimate \eqref{eqn:boundedness_of_functional} shows that the above functional is coercive and hence admits a minimizer which we may obtain via its differential as a consequence of convexity. Thus, we obtain a triangular vector field $v = j_q(w)$ such that for all $\varphi \in \text{Cyl}(Q)$,
    \begin{equation}
        \langle v, \nabla \varphi \rangle = \int_Q \langle v(y, u, s), \nabla \varphi(y, u, s) \rangle \d \gamma(y, u, s) = \langle L, \nabla \varphi \rangle = - \int_Q \partial_s \varphi(y, u, s) \d \gamma(y, u, s).
    \end{equation}

    This precisely shows that $(v_t, \gamma_t)$ is a triangular distributional solution to the continuity equation.
    
    Now, choose any interval $J \subset I$ and choose a sequence $\eta^k \in C^\infty_c(J)$, with $0 \leq \eta^k \leq 1$ and $\eta_k \to \mathbbm{1}_J$ as $k \to \infty$. Moreover choose a sequence $(\nabla_u \varphi_n) \subset V$ converging to $w = j_p(v)$ in $L^q(\gamma, Q)$. Our previous calculations give
    \begin{align}
        &\int_Q \eta^k(s) |v(y, u, s)|^p \d \gamma(y, u, s) = \int_Q \eta^k(s) \langle v, w \rangle \d \gamma = \lim_{n \to \infty} \int_Q \eta^k \langle v, \nabla_u \varphi_n\rangle \d \gamma \\
        &= \lim_{n \to \infty} \langle L, \nabla_u(\eta^k \varphi_n) \rangle \leq \left(\int_J |\gamma^\prime|^p(s) \d s  \right)^{1/p} \left( \int_{J \times Y \times U} |v|^p \d \gamma \right)^{1/p}.
    \end{align}

    Taking $k \to \infty$ we see that
    \begin{equation}
        \int_J \int_{Y \times U} |v_t(y, u)|^p \d \gamma_t(y, u) \d t \leq \int_J |\gamma^\prime|^p(s) \d s
    \end{equation}

    and since $J \subset I$ was arbitrary, we conclude
    \begin{equation}
        \norm{v_t}_{L^p(\gamma_t, Y \times U)} \leq |\gamma^\prime|(t) \qquad \text{a.e.-}t.
    \end{equation}
\end{proof}

We now prove, in some sense, a converse of the previous theorem. 

\theoremvftoac*
\begin{proof}
    We first assume that $U$ is finite dimensional. Our strategy is to check the hypotheses necessary for \citet[Theorem 8.3.1]{ambrosio2005gradient} to hold for $\mu$-almost every $y$, followed by an application of this theorem. By Lemma \ref{lemma:joint_ce_to_conditional_ce}, for $\mu$-almost every $y$ we have that $(\gamma_t^y, v_t^U(y, \blank))$ solve the continuity equation distributionally on $I \times U$.

    By Jensen's inequality (and the assumption $p \geq 1$) we see
    \begin{align}
        \int_I \norm{v_t}_{L^p(\gamma_t, Y \times U)} \d t &= \int_I \E_{y \sim \mu}\left[ \norm{v_t^U(y, \blank)}_{L^p(\gamma_t^y, U)}^p \right]^{1/p} \d t \\
        &\geq \int_I \E_{y \sim \mu} \left[\norm{v_t^U(y, \blank)}_{L^p(\gamma_t^y, U)} \right] \d t \\
        &= \E_{y \sim \mu} \left[ \int_I \norm{v_t^U(y, \blank)}_{L^p(\gamma_t^y, U)} \d t  \right].
    \end{align}

    Since the first term is finite, it follows that
    \begin{equation}
    \norm{v_t^U(y, \blank)}_{L^p(\gamma_t^y, U)} \in L^1(I) \qquad \mu\text{-almost every  } y. 
    \end{equation}

    Now \citet[Lemma~8.1.2]{ambrosio2005gradient} shows that for $\mu$-almost every $y$ we have that $(\gamma_t^y)$ admits a narrowly continuous representative $(\tilde{\gamma}_t^y)$ with $\tilde{\gamma}^y_t = \gamma^y_t$ for almost every $t$. It follows from \citet[Theorem~8.3.1]{ambrosio2005gradient} that for any $t_1 \leq t_2$ in $I$, we have
    \begin{align}
        W_p^p(\tilde{\gamma}^y_{t_1}, \tilde{\gamma}^y_{t_2}) &\leq (t_2 - t_1)^{p-1} \int_{t_1}^{t_2} |v_t^U(y, u)|^p \d \tilde{\gamma}_t^y(u) \d t \\
        &= (t_2 - t_1)^{p-1} \int_{t_1}^{t_2} |v_t^U(y, u)|^p \d {\gamma}_t^y(u) \d t
    \end{align}

    where the second line follows as $\tilde{\gamma}^y_t = \gamma_t^y$ for almost every $t$.

     Let $\tilde{\gamma}_t = \int_Y \tilde{\gamma}^y_t \d \mu(y)$ be the measure obtained via marginalizing over the $Y$-variables. Taking an expectation over $y \sim \mu$, the previous inequality shows us that
     \begin{equation}
         \frac{W_p^{\mu,p}(\tilde{\gamma}_{t_1}, \tilde{\gamma}_{t_2})}{(t_2 - t_1)^p} \leq \frac{1}{t_2 - t_1} \int_{t_1}^{t_2} \norm{v_t}_{L^p(\gamma_t, Y \times U)}^p \d t.
     \end{equation}

     Now, note that $t_1$ is almost surely a Lebesgue point of the right-hand side and $\tilde{\gamma}_{t_1} = \gamma_{t_1}$. Taking $t_2 \to t_1$ along a sequence where $\tilde{\gamma}_{t_2} = \gamma_{t_2}$ shows us that
     \begin{equation}
         |\gamma^\prime|(t) \leq \norm{v_t}_{L^p(\gamma_t), Y \times U}
     \end{equation}
     for almost every $t \in I$.

    In the case that $U$ is infinite dimensional, fix any $y \in Y$ such that Lemma \ref{lemma:joint_ce_to_conditional_ce} holds (which is of full measure) and fix a countable orthonormal basis $(e_k)$ for $U$. Set $\pi^d: U \to \R^d$ to be the projection operator for this basis, i.e. $u \mapsto (\langle u, e_1\rangle, \dots, \langle u, e_d \rangle)$. We consider the collection of finite dimensional conditional measures  $\gamma_t^{d,y} = \pi^d_{\#} \gamma_t^y$. By the same argument in \citet[Theorem~8.3.1]{ambrosio2005gradient}, there exists a vector field $v_t^{d,y}$ on $\R^d$ such that $(\gamma_{t}^{d,y}, v_t^{d,y})$ solve the continuity equation and 
    \begin{equation}
        \norm{v_t^{d,y}}_{L^p(\gamma_t^{d,y}, \R^d)} \leq \norm{v_t^U(y, \blank)}_{L^p(\gamma_t^y, U)}.    
    \end{equation}
    It follows from the finite-dimensional case above that for almost every $t_1 \leq t_2$, we have
    \begin{equation}
        W_p^p(\gamma_{t_1}^{d,y}, \gamma_{t_2}^{d,y}) \leq (t_2 - t_1)^{p-1} \int_{t_1}^{t_2} \norm{v_t^U(y, \blank)}_{L^p(\gamma_t^y, U)}^p \d t. 
    \end{equation}

    Let $\hat{\gamma}_t^{y,d} = (\pi^d)_{\#}^\star \gamma_{t}^{y,d}$ where $(\pi^d)^\star: \R^d \to U$ maps $z \mapsto \sum_{k=1}^d z_k e_k$. As $d \to \infty$ we have $\hat{\gamma}_{t}^{d,y} \to \gamma_t^y$ narrowly for all $t \in I$. Since $(\pi^d)^\star$ is an isometry, \citet[Lemma~7.1.4]{ambrosio2005gradient} shows that
    \begin{equation}
        W_p^p(\gamma_{t_1}^y, \gamma_{t_2}^y) \leq \liminf_{d \to \infty} W_p^p(\gamma_{t_1}^{d,y}, \gamma_{t_2}^{d,y}) \leq (t_2 - t_1)^{p-1} \int_{t_1}^{t_2} \norm{v_t^U(y, \blank)}_{L^p(\gamma_t^y, U)}^p \d t. 
    \end{equation}

    Now, integration with respect to $\d \mu(y)$ yields
    \begin{equation}
        W_p^{p,\mu} (\gamma_{t_1}, \gamma_{t_2}) \leq (t_2 - t_1)^{p-1} \int_{t_1}^{t_2} \norm{v_t}^p_{L^p(\gamma_t, Y \times U)} \d t.
    \end{equation}

    Taking $t_2 \to t_1$ shows that for almost every $t$ we have
    \begin{equation}
        |\gamma^\prime|(t) \leq \norm{v_t}_{L^p(\gamma_t, Y \times U)}. 
    \end{equation}
\end{proof}

Together, Theorem \ref{theorem:ac_to_vf} and Theorem \ref{theorem:vf_to_ac} give us a dynamical interpretation of the conditional Wasserstein distance. The following result is a conditional analogue of the well-known Benamou-Brenier Theorem \citep{benamou2000computational}. Here, we note that the following proof follows the standard proof closely -- the main legwork in obtaining this conditional generalization is through the previous two theorems. 

\theoremconditionalbb*
\begin{proof}
    Write $M$ for the infimum on the right-hand side.

    First, suppose that $(v_t, \mu_t)$ are admissible and $\int_0^1 \norm{v_t}_{L^p(\mu_t)} < \infty$. It follows from Theorem \ref{theorem:vf_to_ac} that $(\gamma_t)$ is an absolutely continuous curve in $\P_p^\mu(Y \times U)$ and $\norm{v_t}_{L^p(\mu_t, Y \times U)} \geq |\gamma'|(t)$. Thus, 
    \begin{equation}
        W_p^{\mu,p}(\eta,\nu) \leq \left( \int_0^1 |\gamma'|(t) \d t \right)^p \leq \int_0^1 \norm{v_t}_{L^p(\mu_t, Y \times U)}^p \d t \leq M.
    \end{equation}

    Conversely, by Theorem \ref{theorem:geodesic_space} there exists a constant speed geodesic $(\gamma_t) \subset \P_p^\mu(Y \times U)$ connecting $\eta$ and $\nu$. Recall that constant speed geodesics are absolutely continuous. By Theorem \ref{theorem:ac_to_vf}, there exists a Borel triangular vector field $v_t$ such that $(v_t, \gamma_t)$ solve the continuity equation, and moreover ${\norm{v_t}_{L^p(\mu_t, Y \times U)} \leq |\gamma^\prime|(t)}$. In fact, because $(v_t, \gamma_t)$ solve the continuity equation, Theorem \ref{theorem:vf_to_ac} yields that $\norm{v_t}_{L^p(\mu_t, Y \times U)} = |\gamma'|(t)$.

    Since $\gamma_t$ is a constant speed geodesic in $\P_p^\mu(Y \times U)$, it follows that $|\mu^\prime|(t) = W_p^\mu(\eta, \nu)$ for almost every $t \in (0, 1)$. Hence,
    \begin{equation}
        W_p^{\mu,p}(\eta, \nu) = \int_0^1 |\gamma^\prime|(t)^p \d t = \int_0^1 \norm{v_t}^p_{L^p(\gamma_t, Y \times U)} \geq M. 
    \end{equation}

    Thus, $W_p^{\mu,p}(\eta,\nu) = M$ as desired.
\end{proof}

\section{Experiment Details}
\label{appendix:experiment_details}

In this section, we provide additional details regarding all of our experiments, as well as additional results not contained within the main paper. All models can be trained on a single GPU with less than 24 GB of memory, and our experiments were parallelized over 8 such GPUs on a local server. We first describe our setting for the 2D and Lotka-Volterra experiments, as these share a similar setup. Details for the Darcy flow inverse problem are described in the corresponding section. 

\paragraph{Models.} For FM and COT-FM, our model architecture is an MLP with SeLU activations \citep{klambauer2017self}. Time conditioning is achieved by concatenating the time variable as an input to the network. The covariance operator $C$ chosen in the path of measures in Equation \eqref{eqn:conditional_fm_measure_and_vectorfield} is taken to be $C = \sigma^2 I$ where $\sigma$ is a hyperparameter.

Our implementation of FM is adapted from the \texttt{torchcfm} package \citet{tong2023improving}, available under the MIT License. For PCP-Map and COT-Flow, we adapt the open-source implementations from \citet{wang2023efficient}, available under the MIT License.

\paragraph{Training and Model Selection.}
Hyperparameter tuning of the PCP-Map and COT-Flow models was performed directly using the code of \citet{wang2023efficient}, essentially implementing grid-search with an early stopping procedure. We refer to the paper and codebase of \citet{wang2023efficient} for further details. For COT-FM and FM, we perform a random grid search over 100 hyperparameter settings using the grid described in Table \ref{tab:hp_grids}. For all model types, we select the best model used to generate the results in the paper as the training checkpoint that resulted in the lowest $W_2$ error to the joint target distribution on a held-out validation set. For training, we use the Adam optimizer where we only tune the learning rate, leaving all other settings as their defaults in \texttt{pytorch}. 

\begin{table}[]
    \centering
    \caption{Hyperparameter grid used for random search of the FM and COT-FM models on the 2D and Lotka-Volterra datasets.}
    \begin{tabular}{ll | c}
        \toprule
         Hyperparameter & Description & Values \\
         \midrule
         $\epsilon$ & COT coupling strength &  [1e-6, 1e-4, 1e-2, 1e-1] \\
         $\sigma$   & Variance for $C = \sigma^2 I$ in \eqref{eqn:conditional_fm_measure_and_vectorfield} & [1e-3, 1e-2, 1e-1, 5e-1] \\
         Batch Size & Training batch size & [256, 512, 1024] \\
         Width & Layer width in MLP & [256, 512, 1024, 2048] \\
         LR & Learning rate & [1e-4, 3e-4, 7e-4, 1e-3] \\ 
         Layers & Number of MLP layers & [4, 6, 8] \\
         \bottomrule
    \end{tabular}

    \label{tab:hp_grids}
\end{table}

\subsection{2D Synthetic Data}
\label{appendix:2d_experiments}

\paragraph{Data Generation.}
This experiment consists of four 2D synthetic datasets, where $Y = U = \R$. The datasets moons, circles, swissroll are available through \texttt{scikit-learn} \citep{scikit-learn}. The moons dataset is generated with \texttt{noise=$0.05$} followed by standard scaling with a mean of $m = (0.5, 0.25)$ and standard deviation of $\sigma = (0.75, 0.25)$. The circles dataset is generated with \texttt{factor=0.5} and \texttt{noise=0.05}. The swissroll dataset is generated with \texttt{noise=0.75}, followed by projection to the first two coordinates and re-scaling by a factor of $12$. All other unstated parameters are left as their default values. We use the code available from \citet{hosseini2023conditional} to generate the checkerboard dataset. For all datasets, we generate a training set (i.e., samples from the target distribution) of $20,000$ samples and $1,000$ held-out validation samples for model selection. Means and standard deviations in Table \ref{tab:2d_table} are reported across five independent testing sets of 5,000 samples for the best representative of each model type.

In COT-FM, to generate samples from the source distribution, we sample an additional $20,000$ points from the target distribution and keep only the $Y$ coordinates. This ensures that the source and target have equal $Y$ marginals. During training, standard Gaussian noise $\NN(0, 1)$ is sampled for the $U$ coordinate of these source points at each minibatch. 

We use minibatch COT couplings \citep{tong2023improving} in this experiment as computing the full COT plan was prohibitively expensive in terms of memory usage. However, we note that we use large batch sizes, meaning that the COT plan we find in this way should not be too far from optimal. All couplings are computed using the \texttt{POT} Python package \citep{flamary2021pot}.

\begin{figure}
    \centering
    \includegraphics[width=\textwidth]{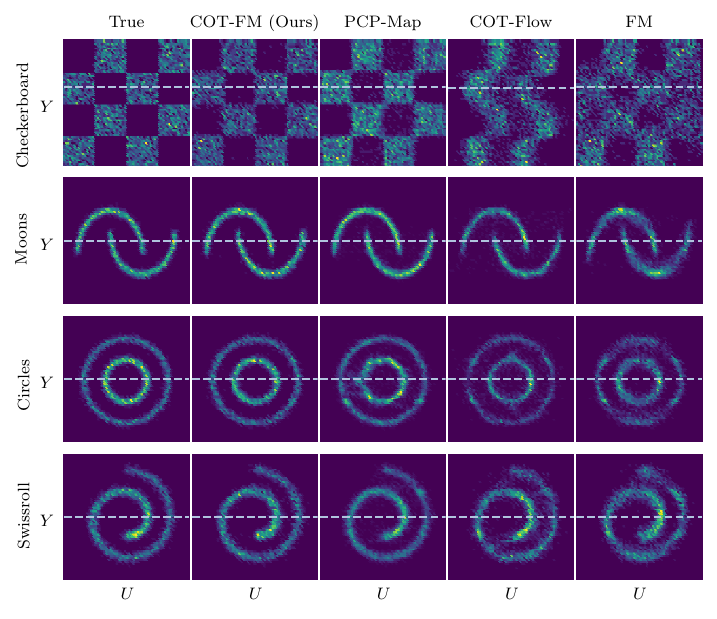}
    \caption{Samples from the ground-truth joint target distribution and the various models for the 2D datasets. Samples from COT-FM more closely match the ground-truth distribution than the baselines. A common failure mode for the baselines is to generate samples from regions with zero support under the true data distributions. Table \ref{tab:2d_table} contains a quantitative evaluation. }
    \label{fig:2d_plot_large}
\end{figure}

\begin{figure}
    \centering
    \includegraphics[width=\textwidth]{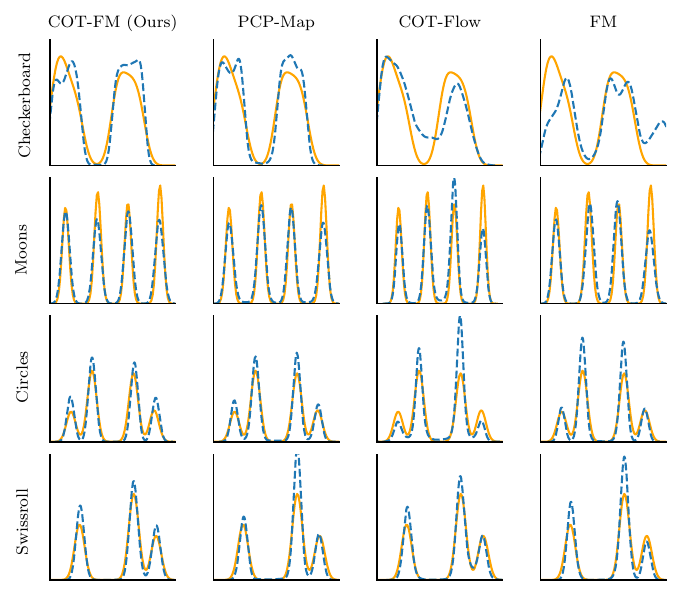}
    \caption{Conditional KDEs shown for each of the methods on the 2D datasets. The conditioning variable $y$ is fixed at the horizontal dashed line shown in Figure \ref{fig:2d_plot_large}. In all plots, the orange solid line indicates the CKDE of the ground-truth joint samples. In each column, the dashed blue line indicates the CKDE of samples generated from the respective method. }
    \label{fig:2d_plot_large_conditional}
\end{figure}

\subsection{Lotka-Volterra Dynamical System}
\label{appendix:lotka_volterra}

\paragraph{Data Generation.} We adopt the settings of \citet{alfonso2023generative} for this experiment. As described in the main paper, we assume $p(0) = (30, 1)$ and that $\log(u) \sim \mathcal{N}(m, 0.5 I)$ with $m = (-0.125, -3, -0.125, -3)$. Given parameters $u \in \R^4_{\geq 0}$, we simulate Equation \eqref{eqn:lotka-volterra} for $t \in \{0, 2, \dots, 20 \}$ to obtain a solution $z(u) \in \R^{22}_{\geq 0}$. An observation $y \in \R^{22}_{\geq 0}$ is obtained by the addition of log-normal noise, i.e. $\log(y) \sim \mathcal{N}(\log(z(u), 0.1 I)$. We thus may simulate many $(y, u)$ pairs from the target measure for training.

We generate a training set of $10,000$ $(y, u)$ pairs using the procedure described above and a held-out validation set of $10,000$ $(y, u)$ pairs for model selection.  Means and standard deviations in Table \ref{tab:lv_table} are reported across five independent testing sets of 5,000 samples for the best representative of each model type. Figure \ref{fig:lv_figure} and Figures \ref{fig:lv_correlation_cotflow}, \ref{fig:lv_correlation_pcmap}, \ref{fig:lv_correlation_cotfm}, \ref{fig:lv_correlation_fm} show $10,000$ samples from each model, as well as $10,000$ samples from the differential evolution Metropolis MCMC sampler \citep{braak2006markov} after a burn-in of $50,000$ samples. This is implemented through the \texttt{PyMC} Python package \citep{abril2023pymc}.

For COT-FM we use the full COT couplings, i.e. without minibatches. This is available to use due to the smaller size of the training set used in this experiment. The COT couplings are computed in the same way as the previous section, and as described in Section \ref{section:experiments}. 

\begin{figure}
    \centering
    \includegraphics[width=\textwidth]{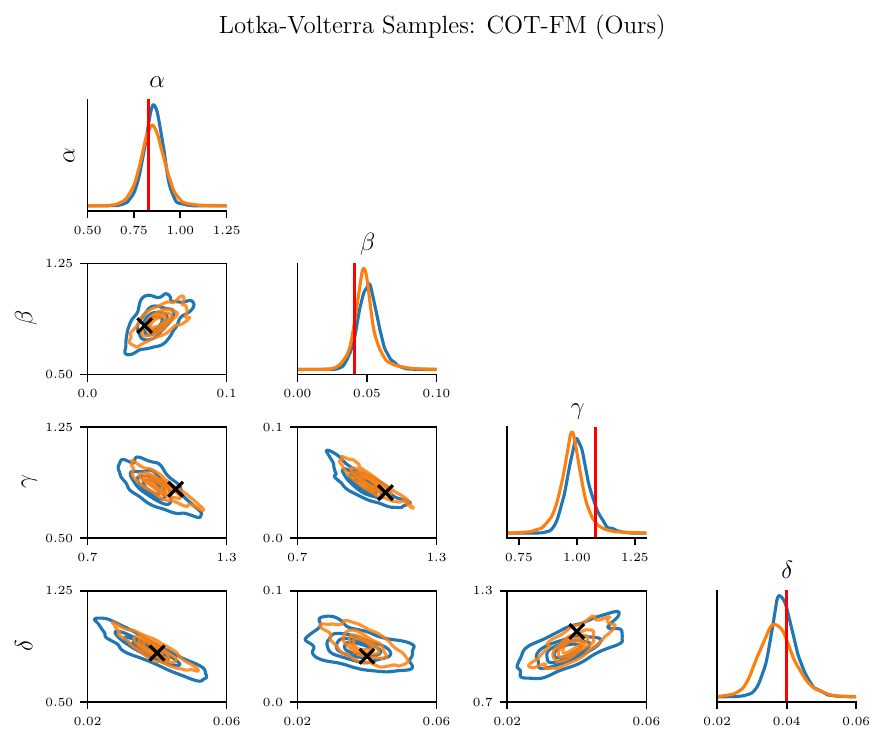}
    \caption{KDE plots of the samples on the Lotka-Volterra system, using the settings described in Section \ref{section:experiments}. Plots include one-dimensional KDEs on the diagonal, as well as all two-dimensional pairs. In all plots, samples from MCMC are drawn in orange, and samples from our method (COT-FM) are indicated in blue. The true unknown parameters are indicated by the red vertical line in the diagonal plots, or the black x in the off-diagonal plots.}
    \label{fig:lv_correlation_cotfm}
\end{figure}

\begin{figure}
    \centering
    \includegraphics[width=\textwidth]{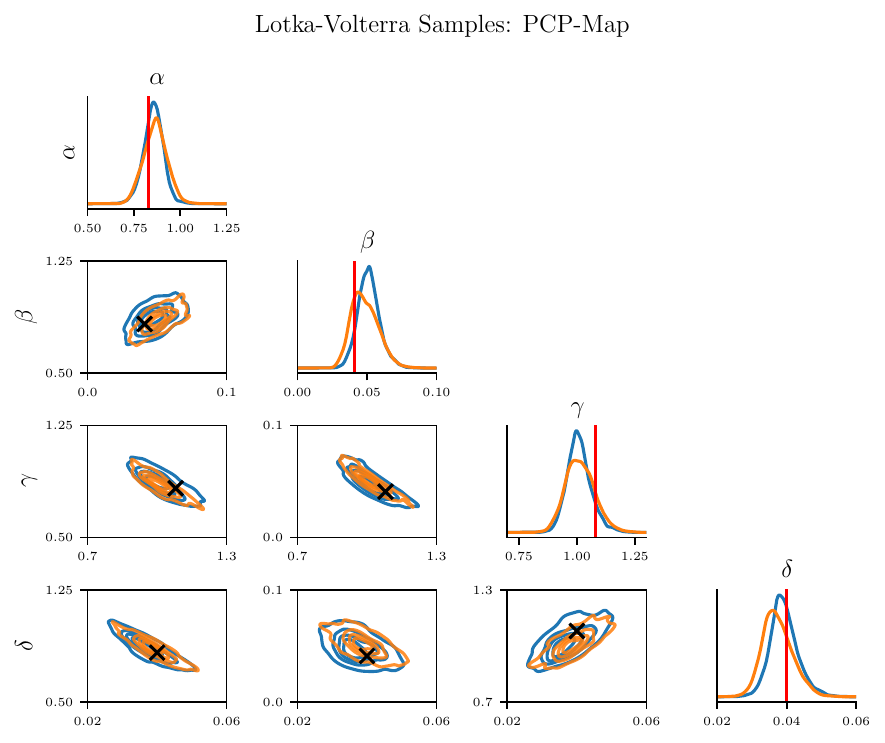}
    \caption{KDE plots of the samples on the Lotka-Volterra system, using the settings described in Section \ref{section:experiments}. Plots include one-dimensional KDEs on the diagonal, as well as all two-dimensional pairs. In all plots, samples from MCMC are drawn in orange, and samples from PCP-Map are indicated in blue. The true unknown parameters are indicated by the red vertical line in the diagonal plots, or the black x in the off-diagonal plots.}
    \label{fig:lv_correlation_pcmap}
\end{figure}

\begin{figure}
    \centering
    \includegraphics[width=\textwidth]{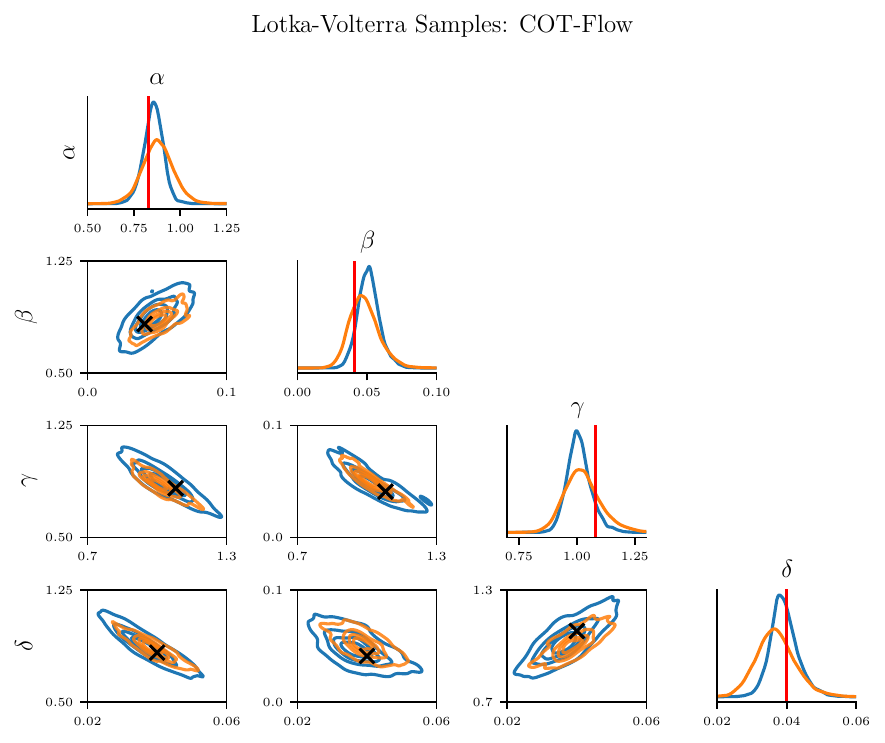}
    \caption{KDE plots of the samples on the Lotka-Volterra system, using the settings described in Section \ref{section:experiments}. Plots include one-dimensional KDEs on the diagonal, as well as all two-dimensional pairs. In all plots, samples from MCMC are drawn in orange, and samples from COT-Flow are indicated in blue. The true unknown parameters are indicated by the red vertical line in the diagonal plots, or the black x in the off-diagonal plots.}
    \label{fig:lv_correlation_cotflow}
\end{figure}

\begin{figure}
    \centering
    \includegraphics[width=\textwidth]{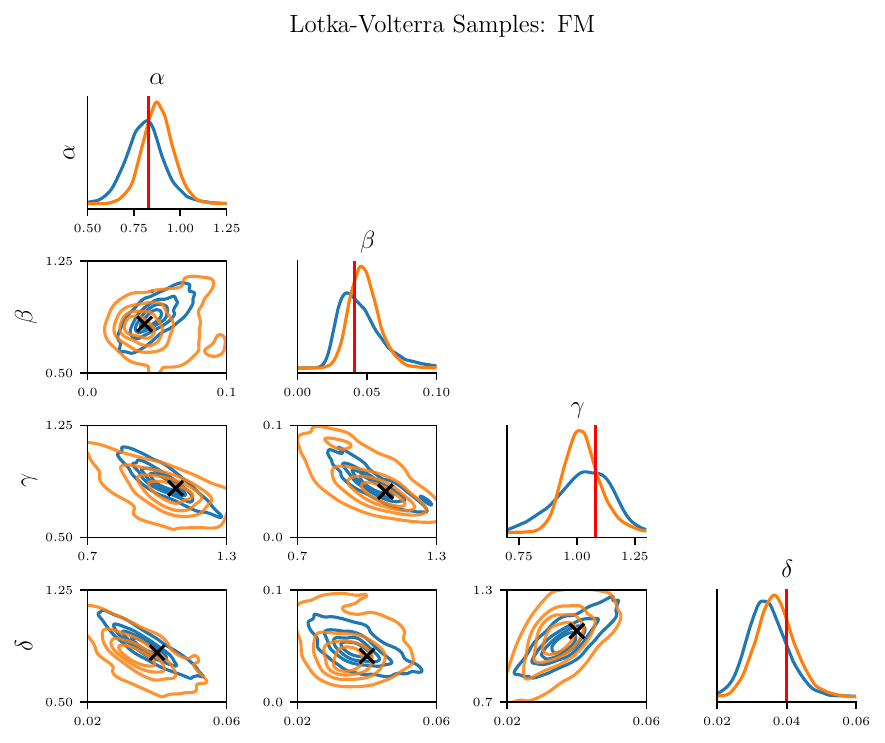}
    \caption{KDE plots of the samples on the Lotka-Volterra system, using the settings described in Section \ref{section:experiments}. Plots include one-dimensional KDEs on the diagonal, as well as all two-dimensional pairs. In all plots, samples from MCMC are drawn in orange, and samples from flow matching (FM) are indicated in blue. The true unknown parameters are indicated by the red vertical line in the diagonal plots, or the black x in the off-diagonal plots.}
    \label{fig:lv_correlation_fm}
\end{figure}

\FloatBarrier

\subsection{Inverse Darcy Flow}
\label{appendix:inverse_darcy_flow}
\paragraph{Dataset. }The training and test datasets are generated following the same procedure as \cite{hosseini2023conditional}: pressure fields $u$ are sampled from a Gaussian process with Mat\'ern kernel having $\nu=3/2$ and lengthscale $\ell = 1/2$, on a regular $40\times40$ grid. The parameters are then exponentiated and used to simulate the permeability fields $p$ from the forward model $\FFF$ solving the Darcy flow PDE, using FEniCS \citep{alnaes2015fenics}. Stochasticity arises from adding Gaussian noise to the permeability fields, obtaining $ y = \FFF(u) + \epsilon, \, \epsilon\sim \NN(0,\sigma^2I)$. For our experiments we observe $y$ on a $100\times100$ grid, and we use $\sigma=2.5\times10^{-2}$. We note that this level of noise is quite considerable, as it accounts for roughly $60\%$ of the variability in the $y$. Figure \ref{fig:darcy-data-example} showcases a data point for reference. Our source and target training sets contain $1\times10^4$ samples each, and our test set comprises $5\times10^3$ samples. We remark that although $y$ and $u$ are observed on a grid their resolution does not need to be fixed, allowing for training at different resolutions. 

\begin{figure}[htb]
    \centering
    \includegraphics{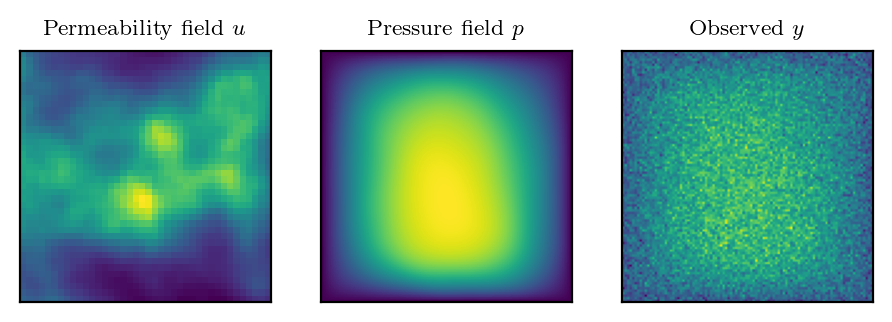}
    \caption{Example of one random data point from the Darcy flow dataset.}
    \label{fig:darcy-data-example}
\end{figure}

\begin{figure}
    \centering
    \includegraphics[width=\textwidth]{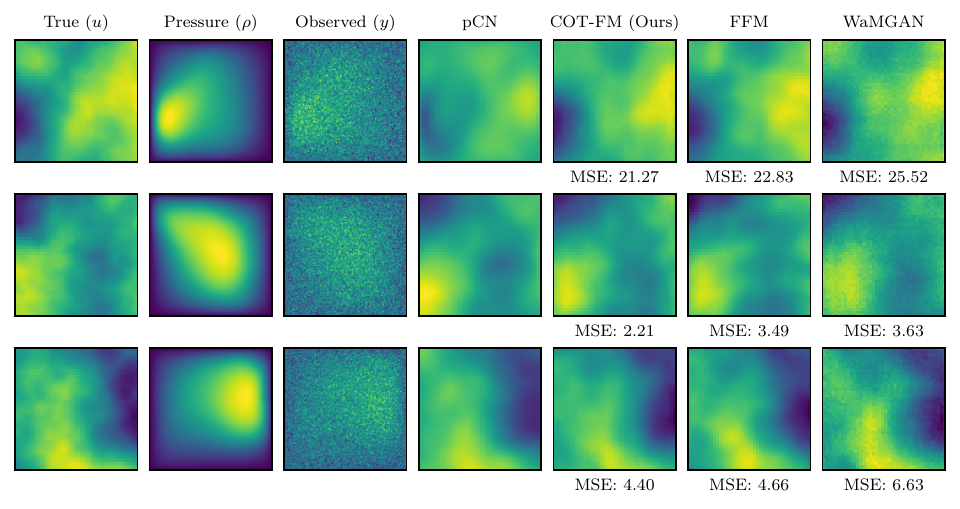}
    \caption{Additional results on the Darcy flow inverse problem. A true permeability $u$ is shown, as well as the pressure field $\rho$ and its observed, noisy version $y$. We compare an ensemble average of posterior samples from the various methods against MCMC (pCN) \citep{10.1214/13-STS421}. COT-FM achieves the lowest MSE to pCN. We note here that WaMGAN has clear visual artifacts despite achieving reasonable MSE and CRPS scores.}
    \label{fig:darcy_flow_large}
\end{figure}

\paragraph{Models.} In order to make learning feasible in infinite-dimensional Hilbert spaces, we adapt the architecture of a Fourier Neural Operator (FNO) \citep{li2020fourier} from the \texttt{neuraloperator} package \citep{kovachki2021neural} to accommodate for conditioning information observed at an arbitrary resolution. We do so by introducing a projection layer mapping the conditioning information to match the hidden channels of the input lifting block, and a pooling operation to project to the input dimensions. The two are then concatenated and passed through an \texttt{FNOBlock} mapping from $(2\times \text{\texttt{hidden\_channels}}) \times  \text{\texttt{input\_dim}}$ to $\text{\texttt{hidden\_channels}} \times  \text{\texttt{input\_dim}}$, before following the original architecture. For all of the models in consideration, we fix the architecture to be have $\, \text{\texttt{hidden\_channels}} =64$, $\, \text{\texttt{projection\_channels}} =256$, and $32$ Fourier modes. We train each model for 1500 epochs, and hyperparameters for each architecture are selected as follows:
\begin{itemize}
    \item WaMGAN \citep{hosseini2023conditional}: using an adaptation to the FNO architecture of the original code\footnote{\hyperlink{https://github.com/TADSGroup/ConditionalOT2023}{https://github.com/TADSGroup/ConditionalOT2023}}, we perform a grid search as detailed in Table \ref{tab:wamgan-search-space}. We found the training procedure to be rather unstable, and for this reason we checkpoint the model every 100 epochs and report the results for the best performing model at its best checkpoint. We found this to be a model with learning rate $1\times10^{-4}$, 2 full critic iterations, and monotone penalty of $1\times10^{-3}$. The gradient penalty parameter did not seem to significantly affect performance on the test set, and was set to 5.
    \item FFM \citep{kerrigan2023functional}: the learning rate is fixed to $5\times10^{-4}$, and the covariance operator $C$ is set to match that of the prior, but rescaled by a factor of $\sigma=1\times 10^{-3}$. We use the code from the original repository\footnote{\hyperlink{https://github.com/GavinKerrigan/functional_flow_matching}{https://github.com/GavinKerrigan/functional\_flow\_matching}}.
    \item COT-FFM: we set $\epsilon=1\times10^{-5}$ in the cost function used to build the COT plan. The learning rate and $C$ are chosen to be the same as FFM. In order to build COT couplings, we take the source measure to be the product measure $ \pi^Y_{\#}\eta \times \mathcal{N}(0,C)$. Approximate couplings are obtained on minibatches of size $256$.
\end{itemize}

It should be noted that in any scenario where the source and the target $U-$marginals are identical, using the OT coupling would yield the identity mapping as the optimal vector field minimizing \eqref{eq:cfm_loss}. Hence, the OT-CFM model \citep{tong2023improving} is inapplicable here.
\paragraph{Sampling.} The resulting amortized sampler, denoted for simplification by the mapping $(y, u_0) \mapsto u_1 = \tilde T_U(y, u_0)$, will parameterize an approximate posterior measure. Notice that, in contrast to classical variational inference techniques, no distributional assumptions are made about the approximate posterior. In turn, integrals are obtained numerically by Monte Carlo sampling $K$ samples from the prior, resulting in the approximation
\begin{equation}
    \nu^y(f) \approx \int f \d\delta_{\tilde T_U(y, u_0)} \d \NN(0,C) \approx \frac{1}{K}\sum^K_{k=1}f(\tilde T_U(y_k, u_{0,k})), \quad \{u_{0,k}\}_{k=1}^K \overset{\text{i.i.d.}}{\sim} \NN(0, C).
\end{equation}

\begin{table}
\centering
\caption{Hyperparameter search space for WaMGAN}

\begin{tabular}{c | c }
\toprule
Parameter & Search Space \\
         \midrule
Learning rate& \{$1\times10^{-3}$, $5\times10^{-4}$, $1\times10^{-4}$\} \\
         Full critic iter.& \{2, 5, 10\} \\
         Monotone penalty& \{$1\times10^{-3}$, $5\times10^{-2}$, $1\times10^{-1}$\} \\
         Gradient penalty& \{1, 5, 10\} \\
\bottomrule

\end{tabular}
\label{tab:wamgan-search-space}
\end{table}

\end{document}